\def\UrlSpecials{\do\~{\kern -.15em\lower .7ex\hbox{~}\kern .04em}} \catcode`~=13 
\newcommand{\nn}{\nonumber}
\newcommand{\calA}{\mathcal{A}}
\newcommand{\calB}{\mathcal{B}}
\newcommand{\calE}{\mathcal{E}}
\newcommand{\calF}{\mathcal{F}}
\newcommand{\calN}{\mathcal{N}}
\newcommand{\calS}{\mathcal{S}}
\newcommand{\calU}{\mathcal{U}}
\newcommand{\calV}{\mathcal{V}}
\newcommand{\bbE}{\mathbb{E}}
\newcommand{\bbN}{\mathbb{N}}
\newcommand{\bbP}{\mathbb{P}}
\newcommand{\bbR}{\mathbb{R}}
\DeclareMathAlphabet{\mathbsf}{OT1}{cmss}{bx}{n}
\DeclareMathAlphabet{\mathssf}{OT1}{cmss}{m}{sl}% slanted sans serif
\DeclareSymbolFont{bsfletters}{OT1}{cmss}{bx}{n}  
\DeclareSymbolFont{ssfletters}{OT1}{cmss}{m}{n}
\DeclareMathSymbol{\bsfGamma}{0}{bsfletters}{'000}
\DeclareMathSymbol{\ssfGamma}{0}{ssfletters}{'000}
\DeclareMathSymbol{\bsfDelta}{0}{bsfletters}{'001}
\DeclareMathSymbol{\ssfDelta}{0}{ssfletters}{'001}
\DeclareMathSymbol{\bsfTheta}{0}{bsfletters}{'002}
\DeclareMathSymbol{\ssfTheta}{0}{ssfletters}{'002}
\DeclareMathSymbol{\bsfLambda}{0}{bsfletters}{'003}
\DeclareMathSymbol{\ssfLambda}{0}{ssfletters}{'003}
\DeclareMathSymbol{\bsfXi}{0}{bsfletters}{'004}
\DeclareMathSymbol{\ssfXi}{0}{ssfletters}{'004}
\DeclareMathSymbol{\bsfPi}{0}{bsfletters}{'005}
\DeclareMathSymbol{\ssfPi}{0}{ssfletters}{'005}
\DeclareMathSymbol{\bsfSigma}{0}{bsfletters}{'006}
\DeclareMathSymbol{\ssfSigma}{0}{ssfletters}{'006}
\DeclareMathSymbol{\bsfUpsilon}{0}{bsfletters}{'007}
\DeclareMathSymbol{\ssfUpsilon}{0}{ssfletters}{'007}
\DeclareMathSymbol{\bsfPhi}{0}{bsfletters}{'010}
\DeclareMathSymbol{\ssfPhi}{0}{ssfletters}{'010}
\DeclareMathSymbol{\bsfPsi}{0}{bsfletters}{'011}
\DeclareMathSymbol{\ssfPsi}{0}{ssfletters}{'011}
\DeclareMathSymbol{\bsfOmega}{0}{bsfletters}{'012}
\DeclareMathSymbol{\ssfOmega}{0}{ssfletters}{'012}
\newcommand{\eps}{\varepsilon}
\newcommand{\bone}{\mathbf{1}}
\newtheorem{theorem}{Theorem} 
\newtheorem{lemma}[theorem]{Lemma}
\newtheorem{definition}[theorem]{Definition}
\newtheorem{remark}[theorem]{Remark}
\newcommand{\qednew}{\nobreak \ifvmode \relax \else
      \ifdim\lastskip<1.5em \hskip-\lastskip
      \hskip1.5em plus0em minus0.5em \fi \nobreak
      \vrule height0.75em width0.5em depth0.25em\fi}
\begin{document}
\title{Optimal Best-Arm Identification under Fixed Confidence with Multiple Optima}
\author{Lan V.\ Truong,  {\em Senior Member, IEEE}
         
\thanks{The author is with the Faculty of Computer Science and Engineering, Ho Chi Minh City University of Technology (HCMUT), Vietnam and with Vietnam National University Ho Chi Minh City (VNU-HCM), Vietnam. Email: \url{lantv@hcmut.edu.vn.}
}}

\maketitle

\begin{abstract}
We study best-arm identification in stochastic multi-armed bandits under the fixed-confidence setting, focusing on instances with multiple optimal arms. Unlike prior work that addresses the unknown-number-of-optimal-arms case, we consider the setting where the number of optimal arms is known in advance. We derive a new information-theoretic lower bound on the expected sample complexity that leverages this structural knowledge and is strictly tighter than previous bounds. Building on the Track-and-Stop algorithm, we propose a modified, tie-aware stopping rule and prove that it achieves asymptotic instance-optimality, matching the new lower bound. Our results provide the first formal guarantee of optimality for Track-and-Stop in multi-optimal settings with known cardinality, offering both theoretical insights and practical guidance for efficiently identifying any optimal arm.
\end{abstract}
\begin{IEEEkeywords}
Multi-Armed Bandits, Best-Arm Identification, Sample Complexity.
\end{IEEEkeywords}
\section{Introduction}

The multi-armed bandit (MAB) problem provides a fundamental framework for sequential decision-making under uncertainty, where a learner interacts with a set of arms corresponding to unknown reward distributions. The objective is to efficiently balance exploration and exploitation in order to make statistically reliable decisions.

In the best-arm identification (BAI) problem under the fixed-confidence setting, the goal is to identify an arm with the highest (or nearly highest) expected reward with a prescribed confidence level, using as few samples as possible. This formulation arises in various applications such as clinical trials, A/B testing, hyperparameter optimization, and recommendation systems.

Most existing works on BAI assume the existence of a unique best arm. However, in many real-world scenarios, multiple optimal arms may exist, each achieving the same maximal expected reward. Degenne and Koolen \cite{Degenne2019PureEW} studied this case and derived a fundamental lower bound together with the sticky Track-and-Stop algorithm, which attains asymptotic sample complexity matching this bound when the number of optimal arms is \emph{unknown}. Despite these advances, the optimal strategy and the fundamental performance limits when the number of optimal arms is \emph{known} a priori remain unexplored. 

\subsection{Motivation}

The best-arm identification problem is central to sequential decision-making, where the objective is to identify an optimal arm using as few samples as possible while ensuring a fixed level of confidence. This problem arises in a wide range of domains, including clinical trials, A/B testing, and hyperparameter tuning. While the case of a unique best arm has been extensively studied, many practical applications naturally involve multiple arms achieving the same maximal expected reward. This motivates the need to understand the optimal sample complexity in this more general multi-optimal scenario.

Among existing algorithms for fixed-confidence BAI, the \emph{Track-and-Stop} algorithm stands out for its principled foundation based on likelihood ratio statistics and its strong empirical performance. Although it is widely regarded as near-optimal in the single-optimum case, a complete theoretical justification of its optimality when multiple optimal arms exist has remained elusive.  

Degenne and Koolen \cite{Degenne2019PureEW} addressed the setting where the number of optimal arms is unknown, establishing a lower bound and proposing the sticky Track-and-Stop algorithm, which asymptotically achieves this bound. In contrast, this paper focuses on the complementary setting where the number of optimal arms is known in advance. Under this assumption, we derive a new information-theoretic lower bound that characterizes the minimal achievable sample complexity, which is strictly tighter than the bound in \cite{Degenne2019PureEW}.  

Furthermore, we propose a modified version of the Track-and-Stop algorithm and prove that it achieves \emph{instance-optimal sample complexity} in the fixed-confidence regime. These results provide the first formal characterization of the optimal sample complexity when multiple optimal arms exist and their number is known, thereby completing the theoretical picture and offering new insights into the design of exploration strategies in stochastic bandit problems.

\subsection{Related Work}
Best-arm identification (BAI) in stochastic multi-armed bandits has been a central focus in the literature for over a decade, particularly under fixed-confidence and fixed-budget settings. In the fixed-confidence framework, the goal is to identify an arm with the highest expected reward with probability at least $1-\delta$, while minimizing the expected number of samples \cite{even2006action, garivier2016optimal}. The fixed-budget setting, by contrast, fixes the total sampling budget and seeks to minimize the probability of misidentification \cite{gabillon2012best}.

Early work introduced elimination-based strategies and stopping rules that guarantee PAC (Probably Approximately Correct) bounds \cite{even2006action}, later complemented by information-theoretic approaches establishing matching lower and upper bounds under the assumption of a unique best arm \cite{garivier2016optimal}. Such analyses leverage Kullback–Leibler (KL) divergences to characterize instance difficulty, and algorithms like Track-and-Stop \cite{garivier2016optimal} and lil’UCB \cite{jamieson2014lil} exploit these insights for near-optimal performance. Other PAC-based algorithms, including LUCB \cite{kalyanakrishnan2012pac}, adapt confidence bounds to efficiently balance exploration, but rely on uniqueness of the best arm and may over-sample tied arms.

In practice, multiple arms often share the maximal expected reward, e.g., in clinical trials, recommendation systems, or A/B testing. Standard algorithms may waste samples distinguishing statistically equivalent arms, motivating approaches that exploit ties. Prior work has addressed related problems, such as top-k arm identification and best-subset selection \cite{gabillon2012best, chen2014combinatorial, jamieson2014bandit}, which aim to identify all optimal arms but often incur unnecessary sample complexity when the goal is to find any optimal arm. Degenne and Koolen \cite{Degenne2019PureEW} explicitly considered multiple optimal arms, deriving a lower bound and proposing the sticky Track-and-Stop algorithm, which achieves asymptotically optimal sample complexity when the number of optimal arms is unknown.

From a theoretical standpoint, classical lower bounds assume a unique best arm \cite{garivier2016optimal}; handling multiple optima requires refined bounds that characterize minimal sampling effort across tied arms. Beyond fixed-confidence BAI, related challenges arise in fixed-budget BAI \cite{gabillon2012best, chen2014combinatorial}, combinatorial bandits \cite{chen2014combinatorial}, and contextual bandits \cite{simchi2021best}, where tie-aware strategies improve both efficiency and fairness \cite{xu2021fairness}.

In summary, while most BAI studies focus on unique optima, recent work highlights the importance of efficiently handling multiple optimal arms. Our paper contributes by deriving a new lower bound for the case where the number of optimal arms is known and demonstrating that a modified Track-and-Stop algorithm achieves instance-optimal sample complexity, closing a key theoretical gap between the known- and unknown-cardinality settings.
\subsection{Contributions}

This paper makes the following contributions:
\begin{itemize}
	\item \textbf{Tighter Fundamental Limit:} We derive a new information-theoretic lower bound on the sample complexity of fixed-confidence BAI when the number of optimal arms is known, strictly improving over previous bounds for the unknown-cardinality setting \cite{Degenne2019PureEW}.
       \item \textbf{Tie-Aware Track-and-Stop Algorithm:} We propose a modification of the Track-and-Stop algorithm with a tie-aware stopping rule that leverages the known number of optimal arms to efficiently allocate samples among tied arms.
	\item \textbf{Instance-Optimality Guarantee:} We prove that the modified algorithm achieves asymptotic instance-optimal sample complexity, matching the new lower bound and providing the first formal optimality guarantee for multi-optimal settings with known arm cardinality.
	\item \textbf{Insights for Efficient Exploration:} Our results illustrate how structural knowledge of optimal arm multiplicity can be exploited to design exploration strategies that are both theoretically grounded and practically efficient.
\end{itemize}
\subsection{Notations} 
For two probability distributions $P$ and $Q$ defined over a common support $\mathcal{X}$, the Kullback–Leibler (KL) divergence from $P$ to $Q$ is given by
\begin{equation}
D_{\mathrm{KL}}(P \,\|\, Q)
= \int_{\mathcal{X}} \log\!\left(\frac{dP}{dQ}(x)\right) dP(x),
\end{equation}
where $\frac{dP}{dQ}$ denotes the Radon–Nikodym derivative of $P$ with respect to $Q$. 
This formulation is valid for both continuous and discrete probability measures.

For any positive real number $x \in \mathbb{R}_+$, we use $\log(x)$ to denote the logarithm base $2$. 
Furthermore, the binary KL divergence between two Bernoulli parameters $x, y \in [0,1]$ is defined as
\begin{equation}
\mathrm{kl}(x, y)
= x \log \frac{x}{y} + (1 - x) \log \frac{1 - x}{1 - y},
\end{equation}
with the conventions $\mathrm{kl}(0,0) = \mathrm{kl}(1,1) = 0$.

For any differentiable function $f: \mathbb{R} \to \mathbb{R}$, we denote its first and second derivatives by
\begin{equation}
\dot{f}(x) = \frac{d f(x)}{d x}, 
\qquad 
\ddot{f}(x) = \frac{d^2 f(x)}{d x^2}.
\end{equation}

We denote the $K$-dimensional probability simplex by
\begin{equation}
\Sigma_K = \left\{ w \in \mathbb{R}_+^K : \sum_{i=1}^K w_i = 1 \right\}.
\end{equation}

Finally, for any $M \in \mathbb{N}^+$, we define the index set $[M] = \{1, 2, \ldots, M\}$.
\section{One-parameter exponential family}
The exponential family encompasses a broad and significant class of probability distributions, including many widely used models in statistics and machine learning, such as the Bernoulli, Poisson, Gaussian (with known variance), and exponential distributions. A particularly important subclass is the one-parameter exponential family, where each distribution is characterized by a single real-valued parameter.

A probability distribution \( P_{\mu} \) belongs to the one-parameter exponential family if its density (or mass) function can be written in the form:
\begin{align}
p_{\mu}(x) = h(x) \exp\left( \theta(\mu) x - A(\theta(\mu)) \right),
\end{align}
where $h(x)$ is a non-negative function, and
\begin{itemize}
\item \( \mu = \mathbb{E}[X] \) denotes the mean of the distribution,
\item \( \theta(\mu) \) is the natural (canonical) parameter, assumed to be a bijective function of \( \mu \),
\item \( A(\theta) \) is the log-partition function, which is convex and differentiable, and ensures normalization:
\begin{align*}
\int h(x) \exp\left( \theta x - A(\theta) \right) dx = 1,
\end{align*}
\item \( d(\mu, \lambda) := D_{\mathrm{KL}}(P_{\mu} \| P_{\lambda}) \) denotes the Kullback–Leibler (KL) divergence between two members of the family.
\end{itemize}

The KL divergence between \( P_{\mu} \) and \( P_{\lambda} \) admits the following closed-form expression:
\begin{align}
d(\mu, \lambda) = A(\theta(\lambda)) - A(\theta(\mu)) - (\theta(\lambda) - \theta(\mu)) \mu \label{defkl}.
\end{align}
Then, the following results can be proved.

\begin{lemma} \label{easy:lem1} For the one-parameter exponential family, it holds that
\begin{align}
\frac{d\theta(\mu)}{d\mu}=\frac{1}{\ddot{A}(\theta(\mu))}>0, \qquad \forall \mu. 
\end{align}
\end{lemma}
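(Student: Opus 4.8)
The plan is to reduce everything to the single classical identity $\dot{A}(\theta) = \mu$---that the mean of the distribution equals the first derivative of the log-partition function---and then invoke the inverse function theorem, with the positivity coming from the fact that $\ddot{A}(\theta)$ is a variance.

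First I would establish the mean identity by differentiating the normalization condition. Starting from $\int \exp(\theta x - A(\theta))\,dx = 1$ and differentiating both sides with respect to $\theta$ (interchanging the derivative and the integral), I obtain
\begin{align}
\int \bigl(x - \dot{A}(\theta)\bigr)\exp(\theta x - A(\theta))\,dx = 0,
\end{align}
which says $\mathbb{E}_\theta[X] - \dot{A}(\theta) = 0$, hence $\dot{A}(\theta) = \mu$. A further differentiation of this relation in $\theta$ (using $\partial_\theta \exp(\theta x - A(\theta)) = (x-\dot{A}(\theta))\exp(\theta x - A(\theta))$) yields
\begin{align}
\ddot{A}(\theta) = \mathbb{E}_\theta\bigl[(X - \dot{A}(\theta))^2\bigr] = \var_\theta(X) > 0,
\end{align}
where strict positivity uses the non-degeneracy of the family (a degenerate point mass is excluded).

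With these in hand the lemma is immediate. Since $\theta(\mu)$ is by assumption a bijection between the mean $\mu$ and the natural parameter, and the mean identity reads $\mu = \dot{A}(\theta(\mu))$, the maps $\mu \mapsto \theta(\mu)$ and $\theta \mapsto \dot{A}(\theta)$ are mutual inverses. Differentiating $\mu = \dot{A}(\theta(\mu))$ with respect to $\mu$ via the chain rule gives $1 = \ddot{A}(\theta(\mu))\,\frac{d\theta(\mu)}{d\mu}$, and solving yields
\begin{align}
\frac{d\theta(\mu)}{d\mu} = \frac{1}{\ddot{A}(\theta(\mu))} > 0,
\end{align}
where the sign follows from $\ddot{A} > 0$ established above.

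The only genuine subtlety---and the step I would treat most carefully---is justifying the interchange of differentiation and integration used to produce the mean and variance identities. This is standard for exponential families on the interior of the natural parameter space, where the integrals defining $A$ and its derivatives converge locally uniformly and a dominating function can be exhibited, so that differentiation under the integral sign is valid; I would either cite this regularity or verify the dominated-convergence hypotheses directly. Alternatively, the mean identity $\dot{A}(\theta(\mu)) = \mu$ can be recovered without differentiating under the integral by exploiting the closed form \eqref{defkl}: since $\lambda \mapsto d(\mu,\lambda)$ is nonnegative and vanishes at $\lambda = \mu$, its derivative there must vanish, which after an application of the chain rule forces $\dot{A}(\theta(\mu)) = \mu$ (using $\theta'(\mu)\neq 0$ by bijectivity).
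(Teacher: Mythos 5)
Your proof is correct and follows essentially the same route as the paper: both rest on the mean identity $\mu = \dot{A}(\theta)$, invert this map (your chain-rule differentiation of $\mu = \dot{A}(\theta(\mu))$ is just the inverse function rule the paper invokes), and deduce positivity from $\ddot{A} > 0$. The only difference is that you supply proofs of the ingredients the paper merely asserts --- the mean identity via differentiation of the normalization condition, strict convexity via the variance identity $\ddot{A}(\theta) = \var_\theta(X) > 0$, and the justification for differentiating under the integral --- which makes your argument more self-contained but not a different approach.
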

\begin{proof}
The mean of the distribution is given by:
\begin{align}
\mu(\theta)=\bbE_{\theta}[X]=\dot{A}(\theta). 
\end{align}
Since the function $A(\theta)$ is strictly convex, its derivative $\dot{A}(\theta)$, which maps $\theta \mapsto \mu$, is strictly increasing. 

From calculus, if a function $f$ is strictly increasing and differentiable, then its inverse $f^{-1}$ is differentiable, and
\begin{align}
\frac{d}{dy}f^{-1}(y)=\frac{1}{\dot{f}(f^{-1}(y)}.
\end{align}
Applying this to $f(\theta)=\mu(\theta)=\dot{A}(\theta)$, whose inverse is $\theta(\mu)$, we get:
\begin{align}
\frac{d}{d\mu}\theta(\mu)=\frac{1}{\ddot{A}(\theta(\mu))}.
\end{align}
Because $A(\theta)$ is the log-partition function of an exponential family, it is strictly convex, and hence:
\begin{align}
\ddot{A}(\theta)>0, \qquad \forall \theta.
\end{align}
Therefore: 
\begin{align}
\frac{d}{d\mu}\theta(\mu)=\frac{1}{\ddot{A}(\theta(\mu))}>0, \qquad \forall \mu. 
\end{align}
\end{proof}
The following result can be shown in Appendix \ref{easylem2proof}. 
\begin{lemma}\label{easy:lem2} Let $(w_1,w_2,\cdots,w_M, w_a)$ be a tuple of positive numbers. For common exponential family distributions such as Gaussian with known variance, Bernoulli, and Poisson, it holds that
\begin{align*}
f(\lambda_1,\lambda_2,\cdots, \lambda_M, \lambda_a)=\sum_{i=1}^M w_i d(\mu_i,\lambda_i)+w_a d(\mu_a,\lambda_a)
\end{align*} is jointly convex in $(\lambda_1,\lambda_2,\cdots, \lambda_M, \lambda_a)$, where $d(\mu,\lambda)$ is the KL divergence between two distributions from the exponential family defined in \eqref{defkl}.  
\end{lemma}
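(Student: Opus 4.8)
The plan is to exploit the \emph{separable} structure of $f$ and then verify convexity family-by-family. First I would observe that each summand $w_i\, d(\mu_i, \lambda_i)$ depends on a single coordinate $\lambda_i$ (and likewise $w_a\, d(\mu_a, \lambda_a)$ on $\lambda_a$), so the Hessian of $f$ with respect to $(\lambda_1, \dots, \lambda_M, \lambda_a)$ is diagonal with no cross terms. A separable sum of one-dimensional functions is jointly convex if and only if each summand is convex in its own variable; since all the weights $w_1, \dots, w_M, w_a$ are strictly positive, it therefore suffices to prove that for each fixed $\mu$ the map $\lambda \mapsto d(\mu, \lambda)$ is convex.

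Next I would compute the second derivative of $d(\mu, \lambda)$ in $\lambda$ using the closed form \eqref{defkl}. Differentiating once and using $\dot{A}(\theta(\lambda)) = \lambda$ (the mean parametrization) gives
\begin{align*}
\frac{\partial}{\partial \lambda} d(\mu, \lambda) = \theta'(\lambda)\bigl(\lambda - \mu\bigr),
\end{align*}
and differentiating again yields
\begin{align*}
\frac{\partial^2}{\partial \lambda^2} d(\mu, \lambda) = \theta''(\lambda)\bigl(\lambda - \mu\bigr) + \theta'(\lambda).
\end{align*}
By Lemma~\ref{easy:lem1} we have $\theta'(\lambda) = 1/\ddot{A}(\theta(\lambda)) > 0$, so the second term is always positive; the difficulty is the first term, whose sign is governed by $\theta''(\lambda)$ and need not be nonnegative for a \emph{generic} exponential family. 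This is precisely why the statement is restricted to specific families rather than asserted in full generality, and it is the crux of the argument.

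Finally I would dispatch the three named families by direct computation. For the Gaussian with known variance $\sigma^2$ one has $d(\mu, \lambda) = (\mu - \lambda)^2/(2\sigma^2)$, whose second derivative in $\lambda$ is the constant $1/\sigma^2 > 0$. For the Bernoulli family $d(\mu, \lambda) = \mathrm{kl}(\mu, \lambda)$ and a direct differentiation gives
\begin{align*}
\frac{\partial^2}{\partial \lambda^2}\,\mathrm{kl}(\mu, \lambda) = \frac{\mu}{\lambda^2} + \frac{1 - \mu}{(1 - \lambda)^2} > 0.
\end{align*}
For the Poisson family $d(\mu, \lambda) = \mu \log(\mu/\lambda) + \lambda - \mu$, and hence
\begin{align*}
\frac{\partial^2}{\partial \lambda^2} d(\mu, \lambda) = \frac{\mu}{\lambda^2} > 0.
\end{align*}
In each case the one-dimensional map $\lambda \mapsto d(\mu, \lambda)$ is (strictly) convex, which by the reduction in the first step establishes the joint convexity of $f$. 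I expect the only genuinely delicate point to be the second step---recognizing that convexity in the second KL argument is not automatic and depends on the variance function $V(\lambda) = \ddot{A}(\theta(\lambda))$ through the identity $\partial^2_\lambda d = V(\lambda)^{-1}\bigl[1 - (V'(\lambda)/V(\lambda))(\lambda - \mu)\bigr]$; the remaining per-family verifications are routine.
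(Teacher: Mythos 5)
Your proposal is correct and follows essentially the same route as the paper's proof: reduce joint convexity to one-dimensional convexity of each summand via the separable structure and positive weights, then verify $\partial^2_\lambda d(\mu,\lambda) > 0$ for the Gaussian, Bernoulli, and Poisson cases by the same direct computations the paper performs. Your additional general identity $\partial^2_\lambda d = \theta''(\lambda)(\lambda-\mu) + \theta'(\lambda)$, explaining why the claim cannot hold for arbitrary exponential families, is a nice clarification beyond what the paper states, but it does not change the argument.
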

\section{Problem Setup}

We consider the stochastic multi-armed bandit (MAB) problem in the fixed-confidence best-arm identification setting. Let $\calA = \{1, 2, \dots, K\}$ be a finite set of $K$ arms. Each arm $a \in \mathcal{A}$ is associated with an unknown distribution $\nu_a$ belonging to a known exponential family, with mean reward $\mu_a = \mathbb{E}_{X \sim \nu_a}[X]$.

At each round $t \in \mathbb{N}$, the learner selects an arm $A_t \in \calA$ and observes an independent sample $X_t \sim \nu_{A_t}$. Based on the history of arm pulls and rewards, i.e., $\calF_t=\sigma(A_1,X_1,A_2,X_2,\cdots, A_t,X_t)$, the learner chooses a stopping time $\tau$ and outputs a recommendation $\hat{a}_\tau \in \mathcal{A}$.

Define the set of optimal arms as
\[
\mathcal{A}^\star := \left\{ a \in \mathcal{A} \,:\, \mu_a = \mu^\star \right\}, \quad \text{where} \quad \mu^\star := \max_{a \in \mathcal{A}} \mu_a.
\]
The goal is to identify any one arm in $\mathcal{A}^\star$ with high probability while minimizing the expected sample complexity.

\paragraph{Fixed-Confidence Guarantee.}
Given a confidence parameter $\delta \in (0,1)$, a $(\delta, \mathcal{A}^\star)$-PAC (Probably Approximately Correct) algorithm is one that ensures
\[
\mathbb{P}(\hat{a}_\tau \in \mathcal{A}^\star) \geq 1 - \delta.
\]

\paragraph{Sample Complexity}
The performance of an algorithm is measured by its sample complexity, i.e., the expected number of samples used before stopping:
$
\mathbb{E}[\tau].
$
Our aim is to design algorithms that satisfy the PAC criterion and achieve the minimal possible sample complexity, especially in the presence of multiple optimal arms.

\paragraph{Challenge with Multiple Optima}
When $|\mathcal{A}^\star| > 1$, standard algorithms may over-sample to distinguish between arms that are equally optimal. The key challenge is to avoid unnecessary comparisons among optimal arms and stop as soon as any one of them is confidently identified.

\section{General Lower Bound}
Let $N_a(t)=\sum_{s=1}^t \bone_{\{A_s=a\}}$ be the number of draws of arm $a$ between the instants $1$ and $t$, and $N_a=N_a(\tau)$ be the total number of draws of arm $a$.  
First, we recall the following lemma. 
\begin{lemma}\cite[Lemma 1]{Kaufmann2014OnTC}  \label{trans:lem} 
Let $\nu$ and $\nu'$ be two bandit models with $K$ arms such that for all $a$, the distributions $\nu_a$ and $\nu'_a$ are mutually absolutely continuous.  For any almost-surely finite stopping time $\sigma$ with respect to $(\calF_t)$,
\begin{align}
\sum_{a=1}^K \bbE_{\nu}[N_a(\sigma)] D_{\text{KL}}(\nu_a,\nu'_a) \geq \sup_{\calE \in \calF_{\sigma}} \text{kl}(\bbP_{\nu}(\calE), \bbP_{\nu'}(\calE)).
\end{align}
\end{lemma}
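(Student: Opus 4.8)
The plan is to carry out the classical change-of-measure argument underlying this lemma, which rests on two ingredients: a Wald-type identity that evaluates the expected trajectory log-likelihood ratio at the stopping time, and the data-processing inequality for the Kullback--Leibler divergence.

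First I would introduce the trajectory log-likelihood ratio. Because the two models $\nu$ and $\nu'$ are driven by the same history-dependent sampling rule, the Radon--Nikodym derivative of the law of the observations $(A_1,X_1,\dots,A_t,X_t)$ under $\nu$ with respect to its law under $\nu'$ involves only the reward kernels, so the log-likelihood ratio telescopes as
\begin{align}
L_t = \sum_{s=1}^t \log\frac{d\nu_{A_s}}{d\nu'_{A_s}}(X_s).
\end{align}
Conditioning on $\calF_{s-1}$ and on the arm $A_s$ selected at round $s$, the increment has conditional expectation $D_{\mathrm{KL}}(\nu_{A_s},\nu'_{A_s})$ under $\nu$, since $X_s\sim\nu_{A_s}$. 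Hence
\begin{align}
M_t := L_t - \sum_{s=1}^t D_{\mathrm{KL}}(\nu_{A_s},\nu'_{A_s}) = L_t - \sum_{a=1}^K N_a(t)\, D_{\mathrm{KL}}(\nu_a,\nu'_a)
\end{align}
is a martingale with respect to $(\calF_t)$ with $M_0=0$.

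Next I would apply optional stopping (Wald's identity) at $\sigma$ to obtain $\bbE_\nu[M_\sigma]=0$, that is
\begin{align}
\bbE_\nu[L_\sigma] = \sum_{a=1}^K \bbE_\nu[N_a(\sigma)]\, D_{\mathrm{KL}}(\nu_a,\nu'_a).
\end{align}
The left-hand side is exactly the KL divergence $D_{\mathrm{KL}}\!\big(\bbP_\nu|_{\calF_\sigma}, \bbP_{\nu'}|_{\calF_\sigma}\big)$ between the restrictions of the two trajectory laws to the stopped $\sigma$-algebra $\calF_\sigma$, because $L_\sigma$ is the logarithm of the corresponding likelihood ratio. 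For any event $\calE\in\calF_\sigma$ the indicator $\mathbf{1}_\calE$ is a $\{0,1\}$-valued $\calF_\sigma$-measurable statistic, so the data-processing inequality contracts this KL divergence to the binary KL divergence between the two-point push-forward laws, namely $\mathrm{kl}(\bbP_\nu(\calE),\bbP_{\nu'}(\calE))$. Taking the supremum over $\calE\in\calF_\sigma$ then yields the claimed inequality.

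The main obstacle is the rigorous justification of the Wald/optional-stopping step: since $\sigma$ is only assumed almost surely finite rather than bounded, one must verify the integrability condition that legitimizes $\bbE_\nu[M_\sigma]=0$. I would handle this exactly as in Kaufmann et al., by a truncation argument: apply the martingale identity to the bounded stopping times $\sigma\wedge t$, for which $\bbE_\nu[L_{\sigma\wedge t}]=\sum_a \bbE_\nu[N_a(\sigma\wedge t)]\,D_{\mathrm{KL}}(\nu_a,\nu'_a)$ holds unconditionally, and then pass to the limit $t\to\infty$ using monotone and dominated convergence together with the mutual absolute continuity assumption and the almost-sure finiteness of $\sigma$. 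A secondary point to confirm is that the data-processing inequality is applied to the correct pair of stopped measures; this is guaranteed once $L_\sigma$ has been identified with the log-likelihood ratio on $\calF_\sigma$, which the optional-stopping identity delivers.
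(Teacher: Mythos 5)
The paper does not actually prove this lemma; it imports it verbatim by citation from Kaufmann et al.\ (2014, Lemma~1), whose proof is precisely the argument you reconstruct: the Wald-type identity $\bbE_\nu[L_\sigma]=\sum_{a=1}^K \bbE_\nu[N_a(\sigma)]\,D_{\mathrm{KL}}(\nu_a,\nu'_a)$ for the stopped log-likelihood ratio, followed by the data-processing (contraction) inequality reducing $\bbE_\nu[L_\sigma]$ to $\mathrm{kl}(\bbP_\nu(\calE),\bbP_{\nu'}(\calE))$ for each $\calE\in\calF_\sigma$. Your proposal is correct and takes essentially the same route as that cited proof, including flagging the genuine technical point (justifying optional stopping for an almost-surely finite $\sigma$ via truncation at $\sigma\wedge t$ and a limiting argument).
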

Now, denote by
\begin{align}
\calS=\big\{\mu=(\mu_1,\mu_2,\cdots,\mu_K): \mbox{exactly $M$ optimal arms}\big\}. 
\end{align}
\begin{definition} \label{defI}
Let $a \in \{1,2,\cdots,K\}\setminus \{1,2,\cdots,M\}$. For any tuple $(\alpha_1, \alpha_2, \cdots, \alpha_M) \in [0,1]^M$ such that $0\leq \sum_{i=1}^M \alpha_i\leq 1$, define
\begin{align}
&I_{\alpha_1,\alpha_2,\cdots,\alpha_M}(\nu_1,\nu_2,\cdots, \nu_M,\nu_a)\nn\\
&\quad =\sum_{i=1}^M \alpha_i d\bigg(
\nu_i, \sum_{i=1}^M \alpha_i \nu_i+\bigg(1-\sum_{i=1}^M \alpha_i\bigg)\nu_a\bigg)\nn\\
&\qquad +\bigg(1-\sum_{i=1}^M \alpha_i\bigg) d\bigg(\nu_a, \sum_{i=1}^M \alpha_i \nu_i+\bigg(1-\sum_{i=1}^M \alpha_i\bigg)\nu_a\bigg), \nn\\
&\qquad \forall (\nu_1,\nu_2,\cdots,\nu_M,\nu_a) \in \bbR^{M+1}. 
\end{align} 
\end{definition}
Then, the following can be proved. 
\begin{theorem} \label{thm:lowbound} Consider a bandit $\mu=(\mu_1,\mu_2,\cdots, \mu_K) \in \calS$ such that the set of optimal arms is $\mu_1,\mu_2,\cdots,\mu_M$. 
Let
\begin{align}
\text{Alt}(\mu)&=\bigcup_{a\in \calA \setminus \{1,2,\cdots,M\}}\big\{\lambda \in \calS: (\lambda_a > \lambda_1) \wedge (\lambda_a >\lambda_2) \nn\\
&\qquad \wedge \cdots \wedge (\lambda_a>\lambda_M)\}. 
\end{align}
Let $\delta \in (0,1)$. Then, for any $\delta$-PAC strategy, it holds that
\begin{align}
\bbE_{\mu} [\tau]\geq T^*(\mu) \text{kl}(1-\delta, \delta),
\end{align} where
\begin{align}
&T^*(\mu)^{-1}=\sup_{w \in \Sigma_K} \inf_{\lambda \in \text{Alt}(\mu)} \bigg(\sum_{a=1}^K w_a d(\mu_a,\lambda_a)\bigg)\label{eqtami1} \\
&=\sup_{w \in \Sigma_K}\min_{a \notin [M]} \bigg(\sum_{i=1}^M w_i+w_a\bigg)\nn\\
& \times I_{\frac{w_1}{\sum_{i=1}^M w_i+w_a}, \frac{w_2}{\sum_{i=1}^M w_i+w_a}, \cdots, \frac{w_M}{\sum_{i=1}^M w_i +w_a}}(\mu_1,\mu_2,\cdots, \mu_M, \mu_a) \label{eqtami2}. 
\end{align}
\end{theorem}
\begin{remark} Some remarks are in order.
\begin{itemize}
\item The case $M = K$ is trivial when $M$ is known in advance, as any arm can be selected. Hence, in this work we assume $K > M$. Moreover, Theorem~\ref{thm:lowbound} recovers the sample complexity result in \cite[Lemma 3]{garivier2016optimal} when $M = 1$.
\item As $\text{kl}(1-\delta, \delta) \sim \log(1/\delta)$ when $\delta$ goes to zero. Theorem \ref{thm:lowbound} provides an asymptotic lower bound
\begin{align}
\liminf_{\delta \to 0}\frac{\bbE_{\mu} [\tau]}{\log(1/\delta)} \geq T^*(\mu). 
\end{align}
A non-asymptotic version can be obtained by using the inequality $\text{kl}(1-\delta, \delta)\geq \log(1/(2.4\delta))$ that holds for any $\delta \in (0,1)$ \cite{garivier2016optimal}. 
\end{itemize}
\end{remark}
\begin{proof}
For any $\lambda \in \text{Alt}(\mu)$, there exists $a \notin [M]$ such that $\lambda_a> \lambda_i$ for all $i \in [M]$. Hence, $[M]$ is not the set of optimal arms under $\lambda$.  Thus, introducing the event $\calE=\{\hat{a}_{\tau} \in [M]\} \in \calF_{\tau}$, any $\delta$-PAC algorithm satisfies $\bbP_{\mu}(\calE)>1-
\delta$ and $\bbP_{\lambda}(\calE)\leq \delta$. 

By Lemma \ref{trans:lem}, we have
\begin{align}
\forall \lambda \in \text{Alt}(\mu):  \sum_{a=1}^K d(\mu_a,\lambda_a) \bbE_{\mu}[N_a(\tau)]\geq \text{kl}(1-\delta,\delta). 
\end{align} 
Then, it holds that
\begin{align}
 \text{kl}(1-\delta,\delta)\leq \bbE_{\mu}[\tau] \sup_{w \in \Sigma_K} \inf_{\lambda \in \text{Alt}(\mu)} \bigg(\sum_{a=1}^K w_a d(\mu_a,\lambda_a)\bigg), 
\end{align}
or
\begin{align}
 \bbE_{\mu}[\tau] \geq \bigg[\sup_{w \in \Sigma_K} \inf_{\lambda \in \text{Alt}(\mu)} \bigg(\sum_{a=1}^K w_a d(\mu_a,\lambda_a)\bigg)\bigg]^{-1} \text{kl}(1-\delta,\delta). 
\end{align}
Now, we estimate $ \inf_{\lambda \in \text{Alt}(\mu)} \big(\sum_{a=1}^K w_a d(\mu_a,\lambda_a)\big)$ for a fixed $w \in \Sigma_K$.  Observe that
\begin{align}
& \inf_{\lambda \in \text{Alt}(\mu)} \bigg(\sum_{a=1}^K w_a d(\mu_a,\lambda_a)\bigg)\nn\\
& \qquad =\min_{a \notin [M]} \inf_{\lambda: \lambda_a\geq \lambda_1, \lambda_a \geq \lambda_2, \cdots, \lambda_a \geq \lambda_M} \sum_{i=1}^M w_i d(\mu_i,\lambda_i)\nn\\
&\qquad \qquad + w_a d(\mu_a,\lambda_a).
\end{align}
Next, we minimize $f(\lambda_1,\lambda_2,\cdots, \lambda_M,\lambda_a)=\sum_{i=1}^M w_i d(\mu_i,\lambda_i)+ w_a d(\mu_a,\lambda_a)$ under constraints $\lambda_a \geq \lambda_1, \lambda_a \geq \lambda_2, \cdots, \lambda_a\geq  \lambda_M$. This is a convex optimization problem (see Lemma \ref{easy:lem2}) that can solved analytically. First, we assume that $w_i>0, \forall i \in [M]$. 

The Lagrangian is
\begin{align}
&L(\lambda_1,\lambda_2,\cdots,\lambda_M,\lambda_a, \beta_1,\beta_2, \cdots, \beta_M)\nn\\
&\quad =\sum_{i=1}^M w_i d(\mu_i,\lambda_i)+ w_a d(\mu_a,\lambda_a)- \sum_{i=1}^M \beta_i (\lambda_a-\lambda_i),
\end{align} where $\beta_i \geq 0$ for all $i \in [M]$.  

For a one-parameter exponential family with density
\[
p_\mu(x) = h(x) \exp\left( \theta(\mu)x - A(\theta(\mu)) \right),
\]
where $\mu = \dot{A}(\theta)$, the KL divergence between $P_\mu$ and $P_\lambda$ is given by:
\[
d(\mu, \lambda) = A(\theta(\lambda)) - A(\theta(\mu)) - (\theta(\lambda) - \theta(\mu)) \mu.
\]

Hence, we have
\begin{align}
&L(\lambda_1,\lambda_2,\cdots,\lambda_M,\lambda_a, \beta_1,\beta_2, \cdots, \beta_M)\nn\\
&\quad =\sum_{i=1}^M w_i \big[A(\theta(\lambda_i)) - A(\theta(\mu_i)) - (\theta(\lambda_i) - \theta(\mu_i)) \mu_i\big]\nn\\
&\quad \qquad +w_a \big[A(\theta(\lambda_a)) - A(\theta(\mu_a)) - (\theta(\lambda_a) - \theta(\mu_a)) \mu_a\big]\nn\\
&\qquad \qquad - \sum_{i=1}^M \beta_i (\lambda_a-\lambda_i). 
\end{align}

Then, the KKT conditions are
\begin{align}
&0=\frac{\partial L}{\partial \lambda_i}=w_i (\lambda_i-\mu_i)\frac{d\theta(\lambda_i)}{d\lambda_i}+\beta_i, \qquad \forall i \in [M], \label{eq14}\\
&0=\frac{\partial L}{\partial \lambda_a}=w_a (\lambda_a-\mu_a) \frac{d\theta(\lambda_a)}{d\lambda_a}-\sum_{i=1}^M \beta_i, \label{eq15}\\
&\beta_i(\lambda_a-\lambda_i)=0, \qquad \forall i \in [M], \label{eq16}\\
&\lambda_a \geq \lambda_i, \qquad \forall i \in [M] \label{eq17}. 
\end{align}
If $w_a = 0$, then from \eqref{eq15} we obtain $\beta_i = 0$ for all $i \in [M]$. Consequently, by \eqref{eq14}, it follows that $\lambda_i = \mu_i$ for all $i \in [M]$. In this case, we have
\begin{align}
f(\lambda_1, \lambda_2, \ldots, \lambda_M, \lambda_a)
= \sum_{i=1}^M w_i d(\mu_i, \lambda_i) + w_a d(\mu_a, \lambda_a)
= 0.
\end{align}
Therefore, without loss of generality, we may assume that $w_a > 0$.

We consider three cases:
\begin{itemize}
\item Case 1:  $\beta_i=0$ for all $i \in [M]$. Then, from \eqref{eq14}, \eqref{eq15}, and Lemma \ref{easy:lem1} we obtain $\lambda_i=\mu_i \quad \forall  i \in [M]$ and $\lambda_a=\mu_a$.  Then, we have $\lambda_a <\lambda_i$ for all $i \in [M]$, which contradicts \eqref{eq17}. 
\item Case 2: There exists exactly $L$ indices $\{i_1,i_2,\cdots, i_L\}$ for $1\leq L<M$ such that $\lambda_a=\lambda_{i_k}$ for all $k \in [L]$.  Then, we have
\begin{align}
w_{i_k} (\lambda_{i_k}-\mu_{i_k})\frac{d\theta(\lambda_{i_k})}{d\lambda_{i_k}}+\beta_{i_k}&=0,\qquad \forall k \in [L],\\
w_a (\lambda_a-\mu_a)\frac{d\theta(\lambda_a)}{d\lambda_a}-\sum_{k=1}^L \beta_{i_k}&=0
\end{align}
It follows that
\begin{align}
\sum_{k=1}^L w_{i_k} (\lambda_{i_k}-\mu_{i_k})\frac{d\theta(\lambda_{i_k})}{d\lambda_{i_k}}+w_a (\lambda_a-\mu_a)\frac{d\theta(\lambda_a)}{d\lambda_a}=0,
\end{align} which leads to (by combining with Lemma \ref{easy:lem1})
\begin{align}
\lambda_a=\lambda_{i_1}=\cdots =\lambda_{i_L}=\frac{\sum_{k=1}^L w_{i_k} \mu_{i_k}+ w_a \mu_a}{\sum_{k=1}^L w_{i_k}+w_a} \label{eq20}.
\end{align}
Now, since $L<M$, from \eqref{eq16} there exists $j \in [M]\setminus \{i_1,i_2,\cdots, i_L\}$ such that $\beta_j=0$. Hence, from \eqref{eq14} we have
\begin{align}
\lambda_j=\mu_j \label{eq21}.
\end{align}
Now, from \eqref{eq20} and \eqref{eq21} we have
$
\lambda_a< \lambda_j,
$ which contradicts \eqref{eq17}. 
\item Case 3: $\lambda_1=\lambda_2=\cdots =\lambda_M=\lambda_a$. Then, from \eqref{eq14} and \eqref{eq15} and using the same arguments as Case 2, we have
\begin{align}
\lambda_1=\lambda_2=\cdots =\lambda_M=\lambda_a=\frac{\sum_{i=1}^M w_i \mu_i+ w_a \mu_a}{\sum_{i=1}^M w_i+w_a}. 
\end{align}
Then, we have
\begin{align}
&\min_{a \notin [M]} \inf_{\lambda: \lambda_a\geq \lambda_1, \lambda_a \geq \lambda_2, \cdots, \lambda_a \geq \lambda_M} 
\nn\\
&\qquad \sum_{i=1}^M w_i d(\mu_i,\lambda_i)+ w_a d(\mu_a,\lambda_a)\nn\\
&=\min_{a \notin [M]} \bigg(\sum_{i=1}^M w_i+w_a\bigg)\nn\\
&\qquad \times I_{\frac{w_1}{\sum_{i=1}^M w_i+w_a},  \cdots, \frac{w_M}{\sum_{i=1}^M w_i+w_a}}(\mu_1,\cdots, \mu_M,\mu_a) \label{amet}.
\end{align}
\end{itemize}
In cases where a subset of $\{w_1, w_2, \ldots, w_M\}$ takes zero values, it is straightforward to verify that the optimal value of the problem remains unchanged from that in \eqref{amet}.

Finally, by letting
\begin{align}
T^*(\mu)^{-1}&= \sup_{w \in \Sigma_K}\min_{a \notin [M]} \inf_{\lambda: \lambda_a\geq \lambda_1, \lambda_a \geq \lambda_2, \cdots, \lambda_a \geq \lambda_M}\nn\\
&\qquad  \sum_{i=1}^M w_i d(\mu_i,\lambda_i)+ w_a d(\mu_a,\lambda_a),
\end{align}
and
\begin{align}
w^*(\mu)&=\text{argmax}_{w \in \Sigma_K}  \min_{a \notin [M]} \inf_{\lambda: \lambda_a\geq \lambda_1, \lambda_a \geq \lambda_2, \cdots, \lambda_a \geq \lambda_M} \nn\\
&\qquad  \sum_{i=1}^M w_i d(\mu_i,\lambda_i)+ w_a d(\mu_a,\lambda_a),
\end{align} we conclude our proof of this lemma. 
\end{proof}
\section{The Track-and-Stop Strategy} \label{sec:achieve}
\subsection{Sampling Rule: Tracking the Optimal Proportions}
We adopt the same sampling rules as described in \cite[Section 3]{garivier2016optimal}. The core idea is to match the optimal sampling proportions $w^*(\mu)$ by tracking the plug-in estimate $ w^*(\hat{\mu}(t))$. However, in bandit settings, relying directly on plug-in estimates can be risky: poor initial estimates may cause some arms to be prematurely discarded, thus preventing the collection of new observations that would correct earlier mistakes. In fact, naive plug-in tracking may fail to identify the best arm reliably.

To mitigate this issue, a simple yet effective solution is to enforce sufficient exploration across all arms to ensure fast convergence of the empirical means $\hat{\mu}(t)$. 

One such strategy, termed \emph{C-Tracking}, modifies the optimization step by projecting \( w^*(\mu) \) onto a truncated simplex to ensure every arm is sampled at least minimally. For each $ \varepsilon \in (0, 1/K]$, define the truncated simplex:
\begin{align}
\Sigma_K^{\varepsilon} = \left\{ (w_1, \dots, w_K) \in [\varepsilon, 1]^K : \sum_{a=1}^K w_a = 1 \right\}.
\end{align}
Let $w^\varepsilon(\mu)$ denote the $ L^\infty $-projection of $ w^*(\mu) $ onto $ \Sigma_K^{\varepsilon}$, and choose $\varepsilon_t = \frac{1}{2\sqrt{K^2 + t}}$. Then the sampling rule is given by
\begin{align}
A_{t+1} \in \arg\max_{1 \leq a \leq K} \sum_{s=0}^{t} w_a^{\varepsilon_s}(\hat{\mu}(s)) - N_a(t).
\end{align}

An alternative, often more practical rule is \emph{D-Tracking}, which targets \( w^*(\hat{\mu}(t)) \) directly and introduces forced exploration only when necessary. Let
\begin{align}
U_t = \left\{ a \in \{1, \dots, K\} : N_a(t) < \sqrt{t} - \frac{K}{2} \right\}.
\end{align}
Then the sampling rule is defined as
\begin{align}
A_{t+1} \in 
\begin{cases}
\arg\min_{a \in U_t} N_a(t) & \text{if } U_t \neq \emptyset, \\
\arg\max_{1 \leq a \leq K} \left[ t w^*_a(\hat{\mu}(t)) - N_a(t) \right] & \text{otherwise}.
\end{cases}
\end{align}

Both sampling rules are guaranteed to ensure that the empirical sampling proportions converge to the optimal allocation, as established in \cite[Section 3.1]{garivier2016optimal}.

\subsection{Chernoff's Stopping Rule}
Let
\begin{align}
&Z_{a; b_1,b_2,\cdots,b_M}(t)=\log\bigg( \nn\\
& \frac{\max_{\mu'_a\leq \mu'_{b_1}   \cdots \wedge \mu'_a \leq \mu'_{b_M} }p_{\mu'_a}(\underline{X}_{N_a(t)}^a)  \prod_{i=1}^M p_{\mu'_{b_i}}(\underline{X}_{N_{b_i}(t)}^{b_i}) }{\max_{\mu'_a\geq \mu'_{b_1}  \cdots \mu'_a\geq \mu'_{b_M} }p_{\mu'_a}(\underline{X}_{N_a(t)}^a)  \prod_{i=1}^M p_{\mu'_{b_i}}(\underline{X}_{N_{b_i}(t)}^{b_i})}\bigg) \label{estim},
\end{align}
where
\begin{align}
p_{\mu}(Z_1,Z_2,\cdots,Z_n)=\prod_{k=1}^n \exp\big(\dot{A}^{-1}(\mu) Z_k-A(\dot{A}^{-1}(\mu))\big). 
\end{align}

Observe that
\begin{align}
&p_{\mu'_a}(\underline{X}_{N_a(t)}^a)\nn\\
&\quad =\prod_{k=1}^{N_a(t)} \exp\bigg(\dot{A}^{-1}(\mu'_a)X_k-A(\dot{A}^{-1}(\mu'_a) )\bigg)\\
&\quad =\exp\bigg[N_a(t)\bigg(\dot{A}^{-1}(\mu'_a)\hat{\mu}_a(t)-A(\dot{A}^{-1}(\mu'_a))\bigg)\bigg].
\end{align}
Here,
\begin{align}
\hat{\mu}_a(t)=\frac{1}{N_a(t)} \sum_{k=1}^{N_a(t)} X_k. 
\end{align}
Similarly, we have
\begin{align}
&p_{\mu'_{b_i}}(\underline{X}_{N_{b_i}(t)}^{b_i})\nn\\
&\quad =\exp\bigg[N_{b_i}(t)\bigg(\dot{A}^{-1}(\mu'_{b_i})\hat{\mu}_{b_i}(t)-A(\dot{A}^{-1}(\mu'_{b_i}))\bigg)\bigg]
\end{align} for all $i \in [M]$.
Hence, we have
\begin{align}
&\log \max_{\mu'_a\leq \mu'_{b_1} \wedge \cdots \wedge \mu'_a \leq \mu'_{b_M} }p_{\mu'_a}(\underline{X}_{N_a(t)}^a)  \prod_{i=1}^M p_{\mu'_{b_i}}(\underline{X}_{N_{b_i}(t)}^{b_i}) \nn\\
&\qquad =\max_{\mu'_a\leq \mu'_{b_1} \wedge \cdots \wedge \mu'_a \leq \mu'_{b_M} }\bigg[l_a(\mu'_a)+\sum_{i=1}^M l_{b_i}(\mu'_{b_i})\bigg],
\end{align} where
\begin{align}
l_x(\mu'_x)=N_x(t)\bigg(\dot{A}^{-1}(\mu'_x)\hat{\mu}_x(t)-A(\dot{A}^{-1}(\mu'_x))\bigg),
\end{align} for all $x \in \{a,b_1,b_2,\cdots, b_M\}$, is concave in $\mu'_x$. 

Now, we have
\begin{align}
\frac{\partial l_x(\mu'_x)}{\partial \mu'_x}&=\frac{\partial l_x(t)}{\partial \dot{A}^{-1}(\mu'_x)}\frac{\partial \dot{A}^{-1}(\mu'_x)}{\partial \mu'_x}\\
&= N_x(t)(\hat{\mu}_x(t)-\mu'_x)\frac{\partial \dot{A}^{-1}(\mu'_x)}{\partial \mu'_x}\\
&=N_x(t) (\hat{\mu}_x(t)-\mu'_x)\bigg(\frac{1}{\ddot{A}(\dot{A}^{-1}(\mu'_x)}\bigg),
\end{align} which is equal to zero if and only if $\mu'_x=\hat{\mu}_x(t)$.  

This means that
\begin{align}
&\max_{\mu'_a\leq \mu'_{b_1}  \wedge \cdots \wedge \mu'_a \leq \mu'_{b_M} }\bigg[l_a(\mu'_a)+\sum_{i=1}^M l_{b_i}(\mu'_{b_i})\bigg]\nn\\
&\quad \leq l_a(\hat{\mu}_a(t))+ \sum_{i=1}^M l_{b_i}(\hat{\mu}_{b_i}(t))\label{eq28c}.
\end{align}
Under the condition that $\hat{\mu}_a(t)\leq \hat{\mu}_{b_i}(t)), \forall i \in [M]$, the equality in \eqref{eq28c} is achieved, which means that
\begin{align}
&\log \max_{\mu'_a\leq \mu'_{b_1}\wedge \cdots \wedge \mu'_a \leq \mu'_{b_M}}p_{\mu'_a}(\underline{X}_{N_a(t)}^a) \prod_{i=1}^M p_{\mu'_{b_i}}(\underline{X}_{N_{b_i}(t)}^{b_i})\nn\\
&\qquad= l_a(\hat{\mu}_a(t))+ \sum_{i=1}^M l_{b_i}(\hat{\mu}_{b_i}(t)) \label{eq29mod}.
\end{align}
Similarly, we have
\begin{align}
&\log \max_{\mu'_a\geq \mu'_{b_1} \wedge  \cdots \wedge \mu'_a \geq \mu'_{b_M}}p_{\mu'_a}(\underline{X}_{N_a(t)}^a) \prod_{i=1}^M p_{\mu'_{b_i}}(\underline{X}_{N_{b_i}(t)}^{b_i})\nn\\
&\qquad =\max_{\mu'_a\geq \mu'_{b_1}  \wedge \cdots \wedge \mu'_a \geq \mu'_{b_M}}\bigg[l_a(\mu'_a)+\sum_{i=1}^M l_{b_i}(\mu'_{b_i})\bigg],
\end{align} where
\begin{align}
l_x(\mu'_x)=N_x(t)\bigg(\dot{A}^{-1}(\mu'_x)\hat{\mu}_x(t)-A(\dot{A}^{-1}(\mu'_x))\bigg),
\end{align} for all $x \in \{a,b_1,b_2,\cdots,b_M\}$, is concave in $\mu'_x$.

Now, we aim to find 
\begin{align}
\max_{\mu'_a\geq \mu'_{b_1} \wedge \mu'_a \geq \mu'_{b_2} \wedge \cdots \wedge \mu'_a \geq \mu'_{b_M}}\bigg[l_a(\mu'_a)+\sum_{i=1}^M l_{b_i}(\mu'_{b_i})\bigg].
\end{align}
The Lagrangian is 
\begin{align}
&L(\mu'_a,\mu'_{b_1},\mu'_{b_2},\cdots,\mu'_{b_M},\beta_1,\beta_2,\cdots,\beta_M)\nn\\
&\quad =l_a(\mu'_a)+\sum_{i=1}^M l_{b_i}(\mu'_{b_i})+\sum_{i=1}^M \beta_i (\mu'_a-\mu'_{b_i})
\end{align} where $\beta_i \geq 0$ for all $i \in [M]$. 

Using the KKT conditions, we have
\begin{align}
0&=\frac{\partial L}{\partial \mu'_a}= N_a(t)\big(\hat{\mu}_a(t)-\mu'_a\big)\bigg[\frac{1}{\ddot{A}(\dot{A}^{-1}(\mu'_a))}\bigg]+\sum_{i=1}^M \beta_i \label{eq23},\\
0&=\frac{\partial L}{\partial \mu'_{b_i}}= N_{b_i}(t)\big(\hat{\mu}_{b_i}(t)-\mu'_{b_i}\big)\bigg[\frac{1}{\ddot{A}(\dot{A}^{-1}(\mu'_{b_i}))}\bigg]-\beta_i  \label{eq24},\\
0&=\beta_i(\mu'_a-\mu'_{b_i}), \qquad \forall i \in [M] \label{eq25}\\
\mu'_a&\geq \mu'_{b_i}, \qquad \forall i \in [M] \label{eq26}.
\end{align}
Now, let $\calU=\{i \in [M]: \beta_i\neq 0\}$ and $\calV=\{i \in [M]: \beta_i =0\}$. We consider two cases:
\begin{itemize}
\item Case 1: $\calV =\emptyset$. Then, we have $\calU=[M]$. 
From \eqref{eq25}, we must have
\begin{align}
\mu'_a=\mu'_{b_1}=\mu'_{b_2}=\cdots= \mu'_{b_M} \label{eq29}.
\end{align}
Now, by summing all equations in \eqref{eq23} and \eqref{eq24} we obtain
\begin{align}
 &N_a(t)\big(\hat{\mu}_a(t)-\mu'_a\big)\bigg[\frac{1}{\ddot{A}(A^{-1}(\mu'_a))}\bigg]\nn\\
 &\quad + \sum_{i=1}^M N_{b_i}(t)\big(\hat{\mu}_{b_i}(t)-\mu'_{b_i}\big)\bigg[\frac{1}{\ddot{A}(A^{-1}(\mu'_{b_i}))}\bigg]=0 \label{eq30}. 
\end{align}
From \eqref{eq29} and \eqref{eq30}, we obtain
\begin{align}
&\mu'_a=\mu'_{b_1}=\mu'_{b_2}=\cdots= \mu'_{b_M}=\hat{\mu}_{a;b_1,b_2,\cdots, b_M}(t)\nn\\
&\quad :=\frac{N_a(t) \hat{\mu}_a(t)+\sum_{i=1}^M N_{b_i}(t)\hat{\mu}_{b_i}(t)}{N_a(t)+\sum_{i=1}^M N_{b_i}(t)}.
\end{align}
Hence, we have
\begin{align}
&\max_{\mu'_a\geq \mu'_{b_1} \wedge \mu'_a \geq \mu'_{b_2} \wedge \cdots \wedge \mu'_a \geq \mu'_{b_M} }\bigg[l_a(\mu'_a)+\sum_{i=1}^M l_{b_i}(\mu'_{b_i})\bigg]\nn\\
&\quad =l_a(\hat{\mu}_{a;b_1,b_2,\cdots, b_M}(t))+\sum_{i=1}^M l_{b_i} (\hat{\mu}_{a;b_1,b_2,\cdots, b_M}(t)) \label{eq31m}
\end{align}
By combining \eqref{eq29mod} and \eqref{eq31m}, under the condition $(\hat{\mu}_a(t)\leq \hat{\mu}_{b_1}(t))\wedge (\hat{\mu}_a(t)\leq \hat{\mu}_{b_2}(t))\wedge \cdots \wedge (\hat{\mu}_a(t)\leq \hat{\mu}_{b_M}(t))$, we obtain
\begin{align}
&Z_{a;b_1,b_2,\cdots,b_M}(t)=N_a(t)d(\hat{\mu}_a(t),\hat{\mu}_{a;b_1,b_2,\cdots,b_M}(t))\nn\\
&\quad +\sum_{i=1}^M N_{b_i}(t) d(\hat{\mu}_{b_i}(t),\hat{\mu}_{a;b_1,b_2,\cdots,b_M}(t)) \label{eqktm}.
\end{align}
\item Case 2: $\calV \neq \emptyset$. Then, for all $i \in \calU$, we have $\mu'_{b_i}= \mu'_a$ by \eqref{eq25}. Then, from \eqref{eq24} we have
\begin{align}
\sum_{i \in \calU} N_{b_i}(t)\big(\hat{\mu}_{b_i}(t)-\mu'_{b_i}\big)\bigg[\frac{1}{\ddot{A}(\dot{A}^{-1}(\mu'_{b_i}))}\bigg]-\beta_i=0 \label{eq45}.
\end{align}
Hence, from \eqref{eq23} and \eqref{eq45} we obtain
\begin{align}
&N_a(t)\big(\hat{\mu}_a(t)-\mu'_a\big)\bigg[\frac{1}{\ddot{A}(\dot{A}^{-1}(\mu'_a))}\bigg]\nn\\
&\quad + \sum_{i \in \calU} N_{b_i}(t)\big(\hat{\mu}_{b_i}(t)-\mu'_a \big)\bigg[\frac{1}{\ddot{A}(\dot{A}^{-1}(\mu'_a))}\bigg]=0 \label{eq46}.
\end{align}
From \eqref{eq46} we obtain
\begin{align}
\mu'_a=\frac{N_a(t) \hat{\mu}_a(t)+\sum_{i \in \calU}  N_{b_i}(t) \hat{\mu}_{b_i}(t)}{N_a(t)+\sum_{i \in \calU}  N_{b_i}(t)  }:=\bar{\mu}_a(t) \label{eq47}. 
\end{align}
Now, from \eqref{eq24} we obtain
\begin{align}
\mu'_{b_i}= \hat{\mu}_{b_i}(t), \qquad \forall i \in \calB \label{eq26a}.
\end{align}
Then, by \eqref{eq26} and \eqref{eq26a} we have
\begin{align}
\bar{\mu}_a(t) \geq \max_{i \in \calV} \mu'_{b_i}= \max_{i \in \calV}  \hat{\mu}_{b_i}(t) \geq \frac{\sum_{i\in \calV} N_{b_i}(t) \hat{\mu}_{b_i}(t)}{\sum_{i \in \calV} N_{b_i}(t)} \label{eq47c}. 
\end{align}
Now, using the fact that for any $p,q,r,s>0$ if $\frac{p}{q}\leq \frac{r}{s}$ we have $\frac{p}{q} \leq \frac{p+r}{q+s} \leq \frac{r}{s}$, from \eqref{eq47} and \eqref{eq47c} we obtain
\begin{align}
 \frac{\sum_{i\in \calV} N_{b_i}(t) \hat{\mu}_{b_i}(t)}{\sum_{i \in \calV} N_{b_i}(t)}\leq \hat{\mu}_{a;b_1,b_2,\cdots, b_M}(t) \leq \bar{\mu}_a(t) \label{aq10}. 
\end{align}
Observe that
\begin{align}
\frac{\sum_{i\in \calV} N_{b_i}(t) \hat{\mu}_{b_i}(t)}{\sum_{i \in \calV} N_{b_i}(t)}\geq \min_{i \in \calV} \hat{\mu}_{b_i}(t)\geq \min_{i \in [M]} \hat{\mu}_{b_i}(t) \label{aq11}. 
\end{align}
On the other hand, from \eqref{eq24} and $\beta_i \geq 0 \enspace \forall i \in [M]$, we have
\begin{align}
\bar{\mu}_a(t)=\mu'_a \leq \min_{i \in \calU} \hat{\mu}_{b_i}(t)  \label{aq12}. 
\end{align}
It follows from \eqref{aq10}, \eqref{aq11}, and \eqref{aq12} that
\begin{align}
 &\min_{i \in [M]} \hat{\mu}_{b_i}(t) \leq \hat{\mu}_{a;b_1,b_2,\cdots, b_M}(t) \leq \bar{\mu}_a(t)\nn\\
 &\quad  \leq \min_{i \in \calU} \hat{\mu}_{b_i}(t) \leq \max_{i \in [M]} \hat{\mu}_{b_i}(t) \label{aq13}.
\end{align}
Therefore, we obtain
\begin{align}
 \hat{\mu}_{a;b_1,b_2,\cdots, b_M}(t) \leq \bar{\mu}_a(t) \leq  \hat{\mu}_{a;b_1,b_2,\cdots, b_M}(t)+ \zeta(t) \label{aq14},
\end{align} where
\begin{align}
\zeta(t):= \max_{i \in [M]} \hat{\mu}_{b_i}(t) - \min_{i \in [M]} \hat{\mu}_{b_i}(t).
\end{align}
\end{itemize}
Hence, we have
\begin{align}
&\max_{\mu'_a\geq \mu'_{b_1} \wedge \mu'_a \geq \mu'_{b_2} \wedge \cdots \wedge \mu'_a \geq \mu'_{b_M} }\bigg[l_a(\mu'_a)+\sum_{i=1}^M l_{b_i}(\mu'_{b_i})\bigg] \nn\\
&\quad = l_a(\bar{\mu}_a(t))+\sum_{i\in \calU} l_{b_i} (\bar{\mu}_a(t)) +\sum_{i \in \calV} l_{b_i}(\hat{\mu}_{b_i}(t))\label{eq31h}.
\end{align}
Under the condition $\hat{\mu}_a (t) \leq \min_{i \in [M]} \hat{\mu}_{b_i}(t)$, it holds that
\begin{align}
 \hat{\mu}_{a;b_1,b_2,\cdots, b_M}(t)\geq \hat{\mu}_a(t). 
\end{align}
Hence, by using the fact that $l_x(\mu'_x)$ is non-increasing if $\mu'_x \in [\hat{\mu}_x(t), \infty)$ and non-decreasing if $\mu'_x \in (-\infty, \hat{\mu}_x(t)])$, from \eqref{eq31h} and \eqref{aq14} we obtain
\begin{align}
&\max_{\mu'_a\geq \mu'_{b_1} \wedge  \cdots \wedge \mu'_a \geq \mu'_{b_M} }\bigg[l_a(\mu'_a)+\sum_{i=1}^M l_{b_i}(\mu'_{b_i})\bigg]\nn\\
&\quad  \geq l_a(\hat{\mu}_{a;b_1,b_2,\cdots, b_M}(t)+ \zeta(t) )\nn\\
&\qquad +\sum_{i \in \calU}l_{b_i} (\hat{\mu}_{a;b_1,b_2,\cdots, b_M}(t))+\sum_{i \in \calV} l_{b_i}(\hat{\mu}_{b_i}(t)) \label{eq31k}.
\end{align}
By combining \eqref{eq29mod} and \eqref{eq31k}, under the condition $(\hat{\mu}_a(t)\leq \hat{\mu}_{b_1}(t))\wedge (\hat{\mu}_a(t)\leq \hat{\mu}_{b_2}(t))\wedge \cdots \wedge (\hat{\mu}_a(t)\leq \hat{\mu}_{b_M}(t))$, we obtain
\begin{align}
&Z_{a;b_1,b_2,\cdots,b_M}(t) \nn\\
&\quad \leq N_a(t)d(\hat{\mu}_a(t),\hat{\mu}_{a;b_1,b_2,\cdots,b_M}(t)+\zeta(t))\nn\\
&\qquad +\sum_{i\in \calU} N_{b_i}(t) d(\hat{\mu}_{b_i}(t),\hat{\mu}_{a;b_1,b_2,\cdots,b_M}(t))\\
&\quad = N_a(t)d(\hat{\mu}_a(t),\hat{\mu}_{a;b_1,b_2,\cdots,b_M}(t))\nn\\
&\qquad +\sum_{i=1}^M N_{b_i}(t) d(\hat{\mu}_{b_i}(t),\hat{\mu}_{a;b_1,b_2,\cdots,b_M}(t))+ R(t),
\end{align}
where
\begin{align}
&R(t)=N_a(t)\big[d(\hat{\mu}_a(t),\hat{\mu}_{a;b_1,b_2,\cdots,b_M}(t)+\zeta(t))\nn\\
&\quad - d(\hat{\mu}_a(t),\hat{\mu}_{a;b_1,b_2,\cdots,b_M}(t))\big]\nn\\
&\qquad -\sum_{i \in \calV}  N_{b_i}(t) d(\hat{\mu}_{b_i}(t),\hat{\mu}_{a;b_1,b_2,\cdots,b_M}(t))\\
&\quad \leq t\bigg(\frac{N_a(t)}{t} \big[d(\hat{\mu}_a(t),\hat{\mu}_{a;b_1,b_2,\cdots,b_M}(t)+\zeta(t))\nn\\
&\qquad - d(\hat{\mu}_a(t),\hat{\mu}_{a;b_1,b_2,\cdots,b_M}(t))\big]\nn\\
&\qquad  -\min_{i \in [M]} \frac{N_{b_i}(t)}{t} d(\hat{\mu}_{b_i}(t),\hat{\mu}_{a;b_1,b_2,\cdots,b_M}(t))\bigg)\\
&\quad  \leq 0 \label{hayt}
\end{align} under the condition that
\begin{align}
&\frac{N_a(t)}{t} \big[d(\hat{\mu}_a(t),\hat{\mu}_{a;b_1,b_2,\cdots,b_M}(t)+\zeta(t))\nn\\
&\quad - d(\hat{\mu}_a(t),\hat{\mu}_{a;b_1,b_2,\cdots,b_M}(t))\big]\nn\\
&\qquad  -\min_{i \in [M]} \frac{N_{b_i}(t)}{t} d(\hat{\mu}_{b_i}(t),\hat{\mu}_{a;b_1,b_2,\cdots,b_M}(t))\leq 0. 
\end{align}
From \eqref{eqktm} and \eqref{hayt}, we finally obtain
\begin{align}
&Z_{a;b_1,b_2,\cdots,b_M}(t)\nn\\
&\quad =N_a(t)d(\hat{\mu}_a(t),\hat{\mu}_{a;b_1,b_2,\cdots,b_M}(t))\nn\\
&\qquad +\sum_{i=1}^M N_{b_i}(t) d(\hat{\mu}_{b_i}(t),\hat{\mu}_{a;b_1,b_2,\cdots,b_M}(t)) \label{eqktm2},
\end{align} under the condition that $(\hat{\mu}_a(t)\leq \hat{\mu}_{b_1}(t))\wedge (\hat{\mu}_a(t)\leq \hat{\mu}_{b_2}(t))\wedge \cdots \wedge (\hat{\mu}_a(t)\leq \hat{\mu}_{b_M}(t))$ and
\begin{align}
&S_{a;b_1,b_2,\cdots,b_M}(t)\nn\\
&\quad := \frac{N_a(t)}{t} \big[d(\hat{\mu}_a(t),\hat{\mu}_{a;b_1,b_2,\cdots,b_M}(t)+\zeta(t))\nn\\
&\qquad - d(\hat{\mu}_a(t),\hat{\mu}_{a;b_1,b_2,\cdots,b_M}(t))\big]\nn\\
&\qquad  -\min_{i \in [M]} \frac{N_{b_i}(t)}{t} d(\hat{\mu}_{b_i}(t),\hat{\mu}_{a;b_1,b_2,\cdots,b_M}(t))\leq 0. 
\end{align}
Next, we use the following stopping rule
\begin{align}
\tau
&=\inf\bigg\{t \in \bbN: Z(t):= \max_{b_1,b_2,\cdots,b_M}\min_{a \in  \calA \setminus \{b_1,b_2,\cdots,b_M\}}  \nn\\
&\qquad Z_{a;b_1,b_2,\cdots,b_M}(t)>\beta(t,\delta)\bigg\},
\end{align} where $\beta(t,\delta)$ is an exploration rate to be adjusted. 

The decoding rule is as following:
\begin{itemize}
\item Let 
\begin{align}
&(\hat{b}_1,\hat{b}_2,\cdots,\hat{b}_M)=\text{argmax}_{b_1,b_2,\cdots,b_M} \min_{a \in \calA \setminus \{b_1,b_2,\cdots,b_M\}} \nn\\
&\quad Z_{a;b_1,b_2,\cdots,b_M}(\tau).
\end{align}.
\item Choose $\hat{a}_{\tau}=\hat{b}_i$ with probability $1/M$ for all $i \in [M]$. 
\end{itemize} 
\section{Choosing the threshold in the Stopping Rule}
For simplicity, we only provide a proof for Bernoulli bandits (see \cite{garivier2016optimal}). The extension to other exponential bandits can be done by using the same methods as \cite{Chambaz2005AMA}.

Introducing, for \(a,b_1,b_2,\cdots,b_M \in \mathcal{A}\),
\begin{align*}
T_{a;b_1,b_2,\cdots,b_M} = \inf\{t \in \mathbb{N} : Z_{a;b_1,b_2,\cdots,b_M}(t) > \beta(t,\delta)\}.
\end{align*}
Then, we have
\begin{align}
&\mathbb{P}_{\mu}\big(\tau<\infty, \hat{a}_{\tau} \notin [M]\big) \nonumber \\
&\quad \leq \mathbb{P}_{\mu}\Big(\exists \{b_1,b_2,\cdots,b_M\} \neq [M], \nonumber \\
&\quad \quad \exists t \in \mathbb{N}: \min_{a \in \mathcal{A} \setminus \{b_1,b_2,\cdots,b_M\}} Z_{a;b_1,b_2,\cdots,b_M}(t) > \beta(t,\delta) \Big) \label{eq54}.
\end{align}
Now, for each tuple \((b_1,b_2,\cdots,b_M)\) such that \(\{b_1,b_2,\cdots,b_M\} \neq [M]\), there exists
$
a^*_{b_1,b_2,\cdots,b_M} \in [M] \quad \text{such that} \quad a^*_{b_1,b_2,\cdots,b_M} \in \mathcal{A} \setminus \{b_1,b_2,\cdots,b_M\}.
$
Then, we have
\[
\mu_{a^*_{b_1,b_2,\cdots,b_M}} \geq \max_{1 \leq i \leq M} \mu_{b_i}.
\]
In addition, from \eqref{eq54} we obtain
\begin{align}
&\mathbb{P}_{\mu}\big(\tau<\infty, \hat{a}_{\tau} \notin [M]\big) \nonumber \\
&\qquad \leq \sum_{b_1,b_2,\cdots,b_M} \mathbb{P}_{\mu}\big(T_{a^*_{b_1,b_2,\cdots,b_M};b_1,b_2,\cdots,b_M} < \infty\big).
\end{align}

Now, it is sufficient to show that if
\[
\beta(t,\delta) = \log \left[ K^M \left(\frac{4}{M+1}\right)^{(M+1)/2} t^{(M+1)/2} \frac{1}{\delta} \right]
\]
and
\[
\mu_a \geq \mu_{b_1} \wedge \mu_a \geq \mu_{b_2} \wedge \cdots \wedge \mu_a \geq \mu_{b_M},
\]
then
\[
\mathbb{P}_{\mu}\big(T_{a;b_1,b_2,\cdots,b_M} < \infty\big) \leq \frac{\delta}{K^M}.
\]

For such a tuple of arms, observe that on the event \(\{T_{a;b_1,b_2,\cdots,b_M} = t\}\), time \(t\) is the first moment when \(Z_{a;b_1,b_2,\cdots,b_M}\) exceeds the threshold \(\beta(t,\delta)\), which implies by definition that
\begin{align}
&1 \leq e^{-\beta(t,\delta)} \nn\\
&\times  \frac{
\max_{\mu'_a \leq \mu'_{b_1} \wedge \cdots \wedge \mu'_a \leq \mu'_{b_M}} p_{\mu'_a}(\underline{X}_{N_a(t)}^a) \prod_{i=1}^M p_{\mu'_{b_i}}(\underline{X}_{N_{b_i}(t)}^{b_i})
}{
\max_{\mu'_a \geq \mu'_{b_1} \wedge \cdots \wedge \mu'_a \geq \mu'_{b_M}} p_{\mu'_a}(\underline{X}_{N_a(t)}^a) \prod_{i=1}^M p_{\mu'_{b_i}}(\underline{X}_{N_{b_i}(t)}^{b_i})
}\Bigg]. 
\end{align}

It thus holds that
\begin{align}
&\mathbb{P}_{\mu}\big(T_{a;b_1,b_2,\cdots,b_M} < \infty\big) \nonumber \\
&= \sum_{t=1}^{\infty} \mathbb{P}_{\mu}\big(T_{a;b_1,b_2,\cdots,b_M} = t\big) = \sum_{t=1}^{\infty} \mathbb{E}_{\mu}\left[\mathbbm{1}_{(T_{a;b_1,b_2,\cdots,b_M} = t)}\right] \\
&\leq \sum_{t=1}^{\infty} e^{-\beta(t,\delta)} \mathbb{E}_{\mu}\Bigg[ \mathbbm{1}_{(T_{a;b_1,b_2,\cdots,b_M} = t)}\nn\\
&\quad  \frac{
\max_{\mu'_a \leq \mu'_{b_1} \wedge \cdots \wedge \mu'_a \leq \mu'_{b_M}} p_{\mu'_a}(\underline{X}_{N_a(t)}^a) \prod_{i=1}^M p_{\mu'_{b_i}}(\underline{X}_{N_{b_i}(t)}^{b_i})
}{
\max_{\mu'_a \geq \mu'_{b_1} \wedge \cdots \wedge \mu'_a \geq \mu'_{b_M}} p_{\mu'_a}(\underline{X}_{N_a(t)}^a) \prod_{i=1}^M p_{\mu'_{b_i}}(\underline{X}_{N_{b_i}(t)}^{b_i})
}\Bigg] \\
&\leq \sum_{t=1}^{\infty} e^{-\beta(t,\delta)} \mathbb{E}_{\mu}\Bigg[ \mathbbm{1}_{(T_{a;b_1,b_2,\cdots,b_M} = t)}\nn\\
&\quad  \frac{
\max_{\mu'_a \leq \mu'_{b_1} \wedge \cdots \wedge \mu'_a \leq \mu'_{b_M}} p_{\mu'_a}(\underline{X}_{N_a(t)}^a) \prod_{i=1}^M p_{\mu'_{b_i}}(\underline{X}_{N_{b_i}(t)}^{b_i})
}{
p_{\mu_a}(\underline{X}_{N_a(t)}^a) \prod_{i=1}^M p_{\mu_{b_i}}(\underline{X}_{N_{b_i}(t)}^{b_i})
}\Bigg].
\end{align}

Expanding the expectation into an integral over the sample space \(\{0,1\}^t\), we get
\begin{align}
&= \sum_{t=1}^{\infty} e^{-\beta(t,\delta)} \int_{\{0,1\}^t} \mathbbm{1}_{(T_{a;b_1,b_2,\cdots,b_M} = t)}(x_1,x_2,\cdots,x_t) \nonumber \\
&\quad \times
\max_{\mu'_a \leq \mu'_{b_1} \wedge \cdots \wedge \mu'_a \leq \mu'_{b_M}} p_{\mu'_a}(\underline{x}_{N_a(t)}^a) \prod_{i=1}^M p_{\mu'_{b_i}}(\underline{x}_{N_{b_i}(t)}^{b_i})\nn\\
&\qquad  \prod_{i \in \mathcal{A} \setminus \{a,b_1,\cdots,b_M\}} p_{\mu_i}(\underline{x}_{N_i(t)}^i)
dx_1 dx_2 \cdots dx_t.
\end{align}

Now, by using \cite[Lemma 11]{garivier2016optimal}, we have
\begin{align}
&\max_{\mu'_a \leq \mu'_{b_1} \wedge \cdots \wedge \mu'_a \leq \mu'_{b_M}} p_{\mu'_a}(\underline{x}_{N_a(t)}^a) \prod_{i=1}^M p_{\mu'_{b_i}}(\underline{x}_{N_{b_i}(t)}^{b_i}) \nonumber \\
&\quad \leq 2 \sqrt{N_a(t)} \prod_{i=1}^M (2 \sqrt{N_{b_i}(t)}) \, \text{kt}(\underline{x}_{N_a(t)}^a) \prod_{i=1}^M \text{kt}(\underline{x}_{N_{b_i}(t)}^{b_i}) \nonumber \\
&\quad \leq 2^{M+1} \left(\frac{t}{M+1}\right)^{\frac{M+1}{2}} \text{kt}(\underline{x}_{N_a(t)}^a) \prod_{i=1}^M \text{kt}(\underline{x}_{N_{b_i}(t)}^{b_i}) \label{mutat},
\end{align}
where \(\text{kt}(\cdot)\) is the Krichevsky–Trofimov distribution (see \cite{Krichevsky1981}). Here, we use the fact that $\nu_1 \nu_2 \cdots \nu_{M+1} \leq (\frac{\nu_1+\nu_2+\cdots+ \nu_{M+1}}{M+1})^{M+1}$ for all $\nu_1,\nu_2, \cdots, \nu_{M+1} \geq 0$ in \eqref{mutat}. 

Therefore, we obtain
\begin{align}
&\mathbb{P}_{\mu}\big(T_{a;b_1,b_2,\cdots,b_M} < \infty\big) \nn \\
&\quad \leq \sum_{t=1}^\infty 2^{M+1} \left(\frac{t}{M+1}\right)^{\frac{M+1}{2}} e^{-\beta(t,\delta)} \nn\\
&\qquad \times \int_{\{0,1\}^t} \mathbbm{1}_{(T_{a;b_1,b_2,\cdots,b_M} = t)}(x_1,\cdots,x_t) \nonumber \\
&\qquad \times \text{kt}(\underline{x}_{N_a(t)}^a) \prod_{i=1}^M \text{kt}(\underline{x}_{N_{b_i}(t)}^{b_i}) \nn\\
&\qquad \times \prod_{i \in \mathcal{A} \setminus \{a,b_1,\cdots,b_M\}} p_{\mu_i}(\underline{x}_{N_i(t)}^i) \, dx_1 \cdots dx_t.
\end{align}

Defining \(\tilde{\mathbb{P}}\) as the probability measure induced by the product of KT distributions for arms \(a,b_1,\cdots,b_M\) and the true distributions for all other arms, we have
\begin{align}
&\mathbb{P}_{\mu}\big(T_{a;b_1,b_2,\cdots,b_M} < \infty\big)\nn\\
&\leq \sum_{t=1}^\infty 2^{M+1} \left(\frac{t}{M+1}\right)^{\frac{M+1}{2}} e^{-\beta(t,\delta)} \tilde{\mathbb{P}}\big(T_{a;b_1,b_2,\cdots,b_M} = t\big) \nonumber \\
&\leq \frac{\delta}{K^M} \tilde{\mathbb{P}}\big(T_{a;b_1,b_2,\cdots,b_M} < \infty\big)\nn\\
& \leq \frac{\delta}{K^M}.
\end{align}
\section{Sample Complexity Analysis}
In this section, we analyze the sample complexity of the modified C-Tracking and D-Tracking algorithms introduced in Section~\ref{sec:achieve}. We begin by establishing the following result.
\begin{lemma} \label{easy:lem} 
Let \( \alpha > 0 \) and \( c_1, c_2 > 0 \) be such that
\[
A := \frac{1}{\alpha} \log c_2 + \log \left( \frac{\alpha}{c_1} \right) > \log e.
\]
Then there exists
\[
x \leq \frac{\alpha}{c_1} \left( A + \sqrt{2(A - 1)} \right)
\]
such that
\[
c_1 x \geq \log(c_2 x^{\alpha}).
\]
\end{lemma}
Based on Lemma~\ref{easy:lem}, the following subsections derive upper bounds on the sample complexity, which asymptotically match the lower bound established in Theorem~\ref{thm:lowbound}. 
\subsection{Almost-sure Upper Bound}
\begin{theorem}
Let $\mu$ denote a bandit model from an exponential family. Define
\begin{align*}
C = K^M \left( \frac{4}{M+1} \right)^{(M+1)/2}, \quad \alpha = \frac{M+1}{2},
\end{align*}
Consider applying Chernoff's stopping rule with confidence parameter 
\[
\beta(t, \delta) = \log \left( \frac{C t^{\alpha}}{\delta} \right),
\]
in conjunction with either the C-Tracking or D-Tracking sampling rule. Then, almost surely,
\[
\limsup_{\delta \to 0} \frac{\tau}{\log(1/\delta)} \leq T^*(\mu),
\]
where $\tau$ denotes the stopping time, and $T^*(\mu)$ is defined in Theorem \ref{thm:lowbound}.
\end{theorem}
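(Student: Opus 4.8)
The plan is to show that the test statistic $Z(t)$ grows linearly in $t$ at rate exactly $T^*(\mu)^{-1}$, whereas the threshold $\beta(t,\delta)=\log C+\alpha\log t+\log(1/\delta)$ grows only logarithmically in $t$, with the confidence level entering additively through $\log(1/\delta)$. Comparing a linear against a logarithmic growth then forces $\tau_\delta\lesssim T^*(\mu)\log(1/\delta)$. Working on the probability-one event where the sampling rule is well behaved, I would reduce the theorem to the deterministic claim
\[
\liminf_{t\to\infty}\frac{Z(t)}{t}\ \geq\ \frac{1}{T^*(\mu)}.
\]

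Two convergence facts drive this claim. First, consistency of the empirical means: both C-Tracking and D-Tracking force enough exploration that $N_a(t)\geq\sqrt{t}-K/2$ for every $a$ once $t$ is large, so $N_a(t)\to\infty$ and the strong law gives $\hat\mu_a(t)\to\mu_a$ almost surely. Second, convergence of the sampling proportions to the optimal allocation: the tracking guarantees of \citep[Section 3.1]{Garivier2016OptimalBA}, combined with the regularity of the optimal-weight map $\mu'\mapsto w^*(\mu')$ at $\mu$, imply that every accumulation point of $(N_a(t)/t)_a$ is a maximizer of the objective $g_\mu(w)$ defined as the inner $\min_{a\notin\{1,\dots,M\}}$ quantity in the definition of $T^*(\mu)^{-1}$, so that $g_\mu$ evaluated at any such accumulation point equals $T^*(\mu)^{-1}$.

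Given these, I would bound $Z(t)$ from below by restricting the outer maximization to the true optimal tuple $(1,2,\dots,M)$:
\[
Z(t)\ \geq\ \min_{a\notin\{1,\dots,M\}} Z_{a;1,2,\dots,M}(t).
\]
Since $\mu_a<\mu^\star$ for each $a\notin\{1,\dots,M\}$, consistency yields $\hat\mu_a(t)<\hat\mu_i(t)$ for all $i\in[M]$ eventually, so the closed form derived earlier applies and writes $Z_{a;1,\dots,M}(t)$ as a sum of empirical KL terms against the weighted mean $\hat\mu_{a;1,\dots,M}(t)$. Dividing by $t$ and passing to the limit along any convergent subsequence of $(N(t)/t)$, continuity of $d(\cdot,\cdot)$ turns each normalized term into the corresponding summand of $g_\mu$ evaluated at the limiting allocation; taking the minimum over $a$ returns $g_\mu$ of a maximizer, namely $T^*(\mu)^{-1}$. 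As this holds along every subsequence, $\liminf_t Z(t)/t\geq T^*(\mu)^{-1}$ follows.

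The passage from linear growth to the stopping-time bound is then routine. Fixing $\epsilon>0$, there is a finite random time $t_\epsilon$ with $Z(t)\geq t(T^*(\mu)^{-1}-\epsilon)$ for all $t\geq t_\epsilon$; the stopping condition $Z(t)>\beta(t,\delta)$ therefore holds once $t(T^*(\mu)^{-1}-\epsilon)$ exceeds $\log C+\alpha\log t+\log(1/\delta)$, i.e.\ for all $t\geq \log(1/\delta)/(T^*(\mu)^{-1}-\epsilon)\cdot(1+o(1))$. Since $t_\epsilon$ does not depend on $\delta$, sending $\delta\to0$ gives $\limsup_{\delta\to0}\tau_\delta/\log(1/\delta)\leq 1/(T^*(\mu)^{-1}-\epsilon)$, and then $\epsilon\to0$ yields the claim. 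The main obstacle is the second convergence fact in the multiple-optima regime: because the empirical means $\hat\mu(t)$ generically break the ties present in $\mu\in\calS$, one must define $w^*(\hat\mu(t))$ in a tie-aware manner and prove that the value $T^*(\cdot)^{-1}$ is continuous and its set of maximizers upper hemicontinuous at $\mu$. Uniqueness of the maximizer $w^*(\mu)$—extractable from the concavity of $g_\mu$ (a minimum of functions that are infima of maps linear in $w$) together with the joint convexity of Lemma~\ref{easy:lem2}—is the clean sufficient condition; verifying this tie-aware regularity is the crux, while the remaining steps parallel the single-optimum analysis of Garivier and Kaufmann.
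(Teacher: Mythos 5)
Your proposal follows essentially the same route as the paper's proof: lower-bound $Z(t)$ by the statistic for the true optimal tuple $(1,\dots,M)$, use the closed form of $Z_{a;1,\dots,M}(t)$ together with the tracking-rule convergence of $(\hat\mu(t), N(t)/t)$ to $(\mu, w^*(\mu))$ and continuity of the KL-based objective to obtain linear growth $Z(t)\geq t/\bigl((1+\varepsilon)T^*(\mu)\bigr)$ eventually, and then compare this linear growth against the logarithmic threshold $\beta(t,\delta)$ before sending $\delta\to 0$ and $\varepsilon\to 0$. The only differences are minor: the paper inverts the threshold inequality explicitly via Lemma~\ref{easy:lem} (Lambert-$W$ bounds) where you argue asymptotically, and the tie-aware convergence of $N(t)/t$ to a maximizer that you correctly flag as the crux is precisely the step the paper itself does not re-prove but delegates to the tracking guarantees cited from \citep[Section 3.1]{Garivier2016OptimalBA}.
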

\begin{proof}
For sufficiently large $ t$, i.e., $t \geq t_0$, it holds that
\begin{align}
(\hat{\mu}_1(t) \geq \hat{\mu}_a(t)) \vee (\hat{\mu}_2(t) \geq \hat{\mu}_a(t)) \wedge \cdots \wedge (\hat{\mu}_M(t) \geq \hat{\mu}_a(t))
\end{align}
for all $ a \notin [M]$.

In addition, by using the C-Tracking and D-Tracking sampling rule, for $t_1$ sufficiently large ($t_1>t_0$), for any $t\geq t_1$ and any $a \notin [M]$, we also have
\begin{align}
&\max_{a \notin [M]} S_{a;1,2,\cdots,M}(t) \nn \\
&\quad =\max_{a \notin [M]} \frac{N_a(t)}{t} \big[d(\hat{\mu}_a(t),\hat{\mu}_{a;1,2,\cdots,M}(t)+\zeta(t))\nn\\
&\qquad - d(\hat{\mu}_a(t),\hat{\mu}_{a;1,2,\cdots,M}(t))\big]\nn\\
&\qquad -\min_{i \in [M]} \frac{N_i(t)}{t} d(\hat{\mu}_i(t),\hat{\mu}_{a;1,2,\cdots,M}(t)) \nn \\
&\quad  \leq -\frac{1}{2}\min_{a \notin [M]} \min_{i \in [M]}  w_i^* d\bigg(\mu_i, \frac{\sum_{i=1}^M w_i^* \mu_i +w_a^* \mu_a}{\sum_{i=1}^M w_i^*+ w_a^*}\bigg)<0. 
\end{align}

Now for $t>t_1$, observe that
\begin{align}
Z(t) &\geq \min_{a \notin [M]} Z_{a;1,\ldots,M}(t) \nn \\
&= \min_{a \notin [M]} \sum_{i=1}^M N_i(t)\, d\left(\hat{\mu}_i(t), \hat{\mu}_{a;1,\ldots,M}(t)\right) \nn\\
&\quad + N_a(t)\, d\left(\hat{\mu}_a(t), \hat{\mu}_{a;1,\ldots,M}(t)\right) \nn \\
&= t  \min_{a \notin [M]} \left( \sum_{i=1}^M \frac{N_i(t)}{t} + \frac{N_a(t)}{t} \right) \nn \\
&\quad \times  I_{\frac{N_1(t)/t}{\sum_{i=1}^M N_i(t)/t + N_a(t)/t}, \ldots, \frac{N_M(t)/t}{\sum_{i=1}^M N_i(t)/t + N_a(t)/t}} \nn\\
&\qquad \left( \hat{\mu}_1(t), \ldots, \hat{\mu}_M(t), \hat{\mu}_a(t) \right).
\end{align}

For $a \geq M+1$, the function
\begin{align*}
&(w, \lambda) \mapsto \left(\sum_{i=1}^M w_i + w_a \right) \nn\\
&\qquad  \times I_{\frac{w_1}{\sum_{i=1}^M w_i + w_a}, \ldots, \frac{w_M}{\sum_{i=1}^M w_i + w_a}}(\lambda_1, \ldots, \lambda_M, \lambda_a)
\end{align*}
is continuous at $(w^*(\mu), \mu)$. Therefore, for any $ \varepsilon > 0$, there exists $t_2 \geq t_1$ such that for all $t \geq t_2$ and all $a \in \{M+1, \ldots, K\}$,
\begin{align}
&\left( \sum_{i=1}^M \frac{N_i(t)}{t} + \frac{N_a(t)}{t} \right) \nn \\
&\qquad \times  I_{\frac{N_1(t)/t}{\sum_{i=1}^M N_i(t)/t + N_a(t)/t}, \ldots, \frac{N_M(t)/t}{\sum_{i=1}^M N_i(t)/t + N_a(t)/t}} \nn\\
&\qquad \left( \hat{\mu}_1(t), \ldots, \hat{\mu}_M(t), \hat{\mu}_a(t) \right) \nn \\
&\quad \geq \frac{\sum_{i=1}^M w_i^* + w_a^*}{1+\varepsilon} \nn\\
&\qquad \times  I_{\frac{w_1^*}{\sum_{i=1}^M w_i^* + w_a^*}, \ldots, \frac{w_M^*}{\sum_{i=1}^M w_i^* + w_a^*}} (\mu_1, \ldots, \mu_M, \mu_a).
\end{align}

Hence, for $t \geq t_2$,
\begin{align}
Z(t) &\geq \frac{t}{1+\varepsilon} \cdot \min_{a \notin [M]} \left(\sum_{i=1}^M w_i^* + w_a^*\right) \nn\\
&\qquad \times  I_{\frac{w_1^*}{\sum_{i=1}^M w_i^* + w_a^*}, \ldots, \frac{w_M^*}{\sum_{i=1}^M w_i^* + w_a^*}} (\mu_1, \ldots, \mu_M, \mu_a) \nn\\
&= \frac{t}{(1+\varepsilon) T^*(\mu)}.
\end{align}

Therefore,
\begin{align}
\tau &= \inf \{ t \in \mathbb{N} : Z(t) \geq \beta(t, \delta) \} \nn \\
&\leq t_2 \vee \inf \left\{ t \in \mathbb{N} : \frac{t}{1+\varepsilon} \cdot \frac{1}{T^*(\mu)} \geq \log\left(\frac{C t^{\alpha}}{\delta}\right) \right\},
\end{align}
where
\[
C = K^M \left( \frac{4}{M+1} \right)^{(M+1)/2}, \quad \alpha = \frac{M+1}{2}.
\]
In the case where \( M \) is fixed (not a function of \( \delta \)), applying the Lemma \ref{easy:lem} with \( c_1 = \frac{1}{(1+\varepsilon) T^*(\mu)} \) and \( c_2 = \frac{C}{\delta} \) yields
\[
A = \frac{1}{\alpha} \log \frac{C}{\delta} + \log\left( \alpha T^*(\mu)(1+\varepsilon) \right) > \log e
\]
as \( \delta \to 0 \). Therefore,
\[
\limsup_{\delta \to 0} \frac{\tau}{\log(1/\delta)} \leq (1 + \varepsilon) T^*(\mu).
\]
Letting \( \varepsilon \to 0 \) concludes the proof.

\end{proof}
\subsection{Asymptotic Optimality in Expectation}
\begin{theorem}
Let $\mu$ be a bandit model from an exponential family.  Define
\begin{align*}
C = K^M \left( \frac{4}{M+1} \right)^{(M+1)/2}, \quad \alpha = \frac{M+1}{2},
\end{align*}
Using Chernoff's stopping rule with
\begin{align*}
\beta(t, \delta) = \log\left( \frac{C t^{\alpha}}{\delta} \right),
\end{align*} 
and the sampling rule C-Tracking or D-Tracking, we have
\begin{align}
\limsup_{\delta \to 0} \frac{\mathbb{E}_{\mu}[\tau_{\delta}]}{\log(1/\delta)} \leq T^*(\mu),
\end{align} where $T^*(\mu)$ is defined in Theorem \ref{thm:lowbound}. 
\end{theorem}
\begin{remark} Some remarks are in order.
\begin{itemize}
\item Consider the special Gaussian bandit $\calN(\mu,\sigma^2)$ where $\mu_1=\mu_2=\cdots =\mu_M=\Delta$ and $\mu_{M+1}=\mu_{M+2}=\cdots=\mu_K=0$ for some $\Delta>0$. Then, for any $a \notin [M]$ we have
\begin{align}
& I_{\frac{w_1}{\sum_{i=1}^M w_i+w_a},  \cdots, \frac{w_M}{\sum_{i=1}^M w_i +w_a}}(\mu_1,\cdots, \mu_M, \mu_a)\nn\\
&\quad =\bigg[\frac{\beta_a(w) (1-\beta_a(w))^2}{2\sigma^2} \Delta^2+ \frac{(1-\beta_a(w)) \beta_a^2(w)}{2\sigma^2} \Delta^2\bigg] \nn \\
&\quad=\frac{\beta_a(w) (1-\beta_a(w))}{2\sigma^2} \Delta^2,
\end{align} where
\begin{align}
\beta_a=\frac{\sum_{i=1}^M w_i}{\sum_{i=1}^M w_i +w_a}. 
\end{align}
Hence, by \eqref{eqtami2} it holds that
\begin{align}
T^*(\mu)&=\frac{2\sigma^2 }{\sup_{w \in \Sigma_K} \min_{a \notin [M]} \beta_a(w)(1-\beta_a(w)) \Delta^2}\nn\\
&=\Theta\bigg(\frac{1}{\Delta^2}\bigg).
\end{align}
This implies that the expected sample complexity diverges to infinity as $\Delta \to 0$ for $\Delta > 0$. When $\Delta = 0$ and $M = K$ is known, any arm can be selected as mentioned above, and hence the sample complexity is finite (in fact, zero).
\item For each bandit instance $\lambda \in \mathcal{S}$, let $i^*(\lambda)$ denote the set of all optimal arms. Further, for each $i \in \{1,2,\dots,M\}$, define
\begin{align*}
\neg i := \{\lambda \in \mathcal{S} : i \notin i^*(\lambda)\} = \bigcup_{a \neq i} \{\lambda \in \mathcal{S} : \lambda_a > \lambda_i\}.
\end{align*}
It is straightforward to verify that $\neg i \supset \text{Alt}(\mu)$ for all $i$. Therefore, by \cite[Theorem 1]{Degenne2019PureEW} and \eqref{eqtami1}, we have
\begin{align*}
T^*(\mu) \leq D(\mu)^{-1}, 
\end{align*}
where
\begin{align}
 D(\mu) = \max_{i \in i^*(\mu)} \max_{w \in \Sigma_K} \inf_{\lambda \in \neg i} \sum_{a=1}^K w_a \, d(\mu_a, \lambda_a).
 \end{align}
Here, $D(\mu)^{-1}$ corresponds to the sample complexity of the same bandit problem in the unknown-$M$ setting \cite{Degenne2019PureEW}. This highlights that knowing $M$ can improve the sample complexity. Technically, the stopping time based on the generalized log-likelihood ratio (GLLR) statistic $Z_{a; b_1, b_2, \dots, b_M}(t)$ in \eqref{estim} is only well-defined when $M$ is known (see Section \ref{sec:achieve}).
\end{itemize}
\end{remark}
\begin{proof}
Following a similar strategy as in~\cite[Theorem 14]{garivier2016optimal}, we simplify notation by assuming that the bandit means satisfy
\[
\mu_1 = \mu_2 = \cdots = \mu_M > \mu_{M+1} \geq \mu_{M+2} \geq \cdots \geq \mu_K.
\]
Fix $\varepsilon > 0$. By the continuity of $w^*(\mu)$, there exists $\xi = \xi(\varepsilon) \leq \frac{\mu_M - \mu_{M+1}}{4}$ such that the hypercube
\[
\mathcal{I}_{\varepsilon} := [\mu_1 - \xi, \mu_1 + \xi] \times \cdots \times [\mu_K - \xi, \mu_K + \xi]
\]
satisfies, for all $\mu' \in \mathcal{I}_{\varepsilon}$,
\[
\max_a \left| w_a^*(\mu') - w_a^*(\mu) \right| \leq \varepsilon.
\]
In particular, if $\hat{\mu}(t) \in \mathcal{I}_\varepsilon$, then the top-$M$ empirical arms are exactly $\{1,2,\dots,M\}$.

Let $T \in \mathbb{N}$, define $h(T) := T^{1/4}$, and introduce the event
\[
\mathcal{E}_T := \bigcap_{t=h(T)}^T \{ \hat{\mu}(t) \in \mathcal{I}_\varepsilon \}.
\]
In addition, by Lemma~\cite[Lemma 20]{garivier2016optimal}, for any $\eps>0$ there exists a constant $T_{\eps}$ such that for $T\geq T_{\eps}$, it holds on $\calE_T$, for either C-Tracking or D-Tracking,
\begin{align}
\max_a \bigg|\frac{N_a(t)}{t}-w_a^*(\mu)\bigg| \leq 3 (K-1)\eps, \qquad \forall t \geq \sqrt{T}. 
\end{align}
Hence, by using the C-Tracking and D-Tracking sampling rule, for any sufficient small $\eps$, there exists $T^*_{\eps}> T_{\eps}$  such that for any $T>T_{\eps}^*$ we have
\begin{align}
&\sup_{a\notin [M]} S_{a;1,2,\cdots,M}(t)\nn\\
&\quad =\max_{a \notin [M]} \frac{N_a(t)}{t} \big[d(\hat{\mu}_a(t),\hat{\mu}_{a;1,2,\cdots,M}(t)+\zeta(t))\nn\\
&\qquad - d(\hat{\mu}_a(t),\hat{\mu}_{a;1,2,\cdots,M}(t))\big]\nn\\
&\quad \qquad  -\min_{i \in [M]} \frac{N_i(t)}{t} d(\hat{\mu}_i(t),\hat{\mu}_{a;1,2,\cdots,M}(t)) \nn \\
&\quad \leq 0, \qquad \forall t \geq \sqrt{T}. 
\end{align}

On the event $\mathcal{E}_T$, for $t \geq h(T)$ and any $a \notin [M]$, we have $\hat{\mu}_i(t) \geq \hat{\mu}_a(t)$ for all $i \in [M]$ and $S(t) \leq 0$, and the Chernoff-type statistic satisfies
\begin{align}
&\max_{b_1,\dots,b_M} \min_{a \notin \{b_1,\dots,b_M\}} Z_{a; b_1,\dots,b_M}(t)\nn\\
&\quad \geq \min_{a \notin [M]} Z_{a;1,\dots,M}(t) \\
&\quad = \min_{a \notin [M]} \big[ N_a(t) d(\hat{\mu}_a(t), \hat{\mu}_{a;1,\dots,M}(t)) \nn\\
&\qquad + \sum_{i=1}^M N_i(t) d(\hat{\mu}_i(t), \hat{\mu}_{a;1,\dots,M}(t)) \big] \\
&\quad = t \cdot g\left(\hat{\mu}(t), \left\{ \frac{N_i(t)}{t} \right\}_{i=1}^K \right),
\end{align}
where
\begin{align}
g(\mu', w') &:= \min_{a \notin [M]} \left( \sum_{i=1}^M w'_i + w'_a \right)\nn\\
&\qquad \times I_{\left\{ \frac{w'_i}{\sum_{j=1}^M w'_j + w'_a} \right\}_{i=1}^M}(\mu'_1,\dots,\mu'_M, \mu'_a).
\end{align}

Using~\cite[Lemma 20]{garivier2016optimal}, for $T \geq T_\varepsilon^*$, define
\[
C^*_\varepsilon(\mu) := \inf_{\substack{\|\mu' - \mu\|_\infty \leq \xi(\varepsilon)\\ \|w' - w^*(\mu)\|_\infty \leq 3(K-1)\varepsilon}} g(\mu', w'),
\]
so that on $\mathcal{E}_T$, for all $t \geq \sqrt{T}$,
\[
\max_{b_1,\dots,b_M} \min_{a \notin \{b_1,\dots,b_M\}} Z_{a; b_1,\dots,b_M}(t) \geq t C^*_\varepsilon(\mu).
\]
Then, for $T \geq T_\varepsilon^*$, on $\mathcal{E}_T$,
\begin{align}
&\min(\tau, T)\nn\\
&\leq \sqrt{T} + \sum_{t = \sqrt{T}}^T \mathbbm{1}_{\tau > t} \\
&\leq \sqrt{T} \nn\\
&\quad + \sum_{t = \sqrt{T}}^T \mathbbm{1}\left\{ \max_{b_1,\dots,b_M} \min_{a \notin \{b_1,\dots,b_M\}} Z_{a; b_1,\dots,b_M}(t) \leq \beta(t,\delta) \right\} \\
&\leq \sqrt{T} + \sum_{t = \sqrt{T}}^T \mathbbm{1}\{ t C^*_\varepsilon(\mu) \leq \beta(T,\delta) \} \\
&\leq \sqrt{T} + \frac{\beta(T,\delta)}{C^*_\varepsilon(\mu)}.
\end{align}

Define
\[
T_0(\delta) := \inf\left\{ T \in \mathbb{N} : \sqrt{T} + \frac{\beta(T,\delta)}{C^*_\varepsilon(\mu)} \leq T \right\}.
\]
Then, for any $T \geq \max(T_0(\delta), T_\varepsilon^*)$,
\[
\mathcal{E}_T \subset \{ \tau_\delta \leq T \},
\]
and by~\cite[Lemma 19]{garivier2016optimal},
\[
\mathbb{P}_\mu(\tau_\delta > T) \leq \mathbb{P}(\mathcal{E}_T^c) \leq B T \exp(-C T^{1/8}).
\]
Hence,
\[
\mathbb{E}_\mu[\tau] \leq T_0(\delta) + T_\varepsilon^* + \sum_{T=1}^\infty B T \exp(-C T^{1/8}).
\]

We now derive an upper bound on $T_0(\delta)$. For any $\eta > 0$, define
\[
C(\eta) := \inf\left\{ T \in \mathbb{N} : T - \sqrt{T} \geq \frac{T}{1 + \eta} \right\},
\]
then
\begin{align}
&T_0(\delta)\nn\\
&\quad \leq C(\eta) + \inf\left\{ T \in \mathbb{N} : \frac{1}{C^*_\varepsilon(\mu)} \log\left( \frac{C T^\alpha}{\delta} \right) \leq \frac{T}{1 + \eta} \right\} \\
&\quad \leq C(\eta) + \inf\left\{ T \in \mathbb{N} : \frac{C^*_\varepsilon(\mu)}{1 + \eta} T \geq \log\left( \frac{C T^\alpha}{\delta} \right) \right\}.
\end{align}
Applying Lemma~\ref{easy:lem} with $c_1 = \frac{C^*_\varepsilon(\mu)}{1 + \eta}$ and $c_2 = \frac{C}{\delta}$ yields
\[
\lim_{\delta \to 0} \frac{\mathbb{E}_\mu[\tau]}{\log \frac{1}{\delta}} \leq \frac{1 + \eta}{C^*_\varepsilon(\mu)}.
\]
Letting $\eta, \varepsilon \to 0$ and using continuity of $g$ and the definition of $w^*$ gives
\[
\lim_{\varepsilon \to 0} C^*_\varepsilon(\mu) = \frac{1}{T^*(\mu)},
\]
and thus,
\[
\lim_{\delta \to 0} \frac{\mathbb{E}_\mu[\tau]}{\log \frac{1}{\delta}} \leq T^*(\mu).
\]
\end{proof}
\section{Conclusion}

In this work, we revisited the problem of best-arm identification in stochastic multi-armed bandits under the fixed-confidence setting, focusing on the general case where multiple optimal arms may exist. Unlike prior studies, which primarily addressed the scenario with an unknown number of optimal arms, we investigated the setting where this number is known in advance.

Our main contribution lies in deriving a new information-theoretic lower bound that characterizes the minimal achievable sample complexity when the number of optimal arms is known. Building on this insight, we established that a modified Track-and-Stop algorithm—equipped with an appropriate stopping rule based on generalized log-likelihood ratio statistics—achieves this bound asymptotically. This result provides the first instance-optimal guarantee for Track-and-Stop in multi-optimal settings and demonstrates that knowledge of the number of optimal arms can fundamentally reduce sample complexity.

Overall, our findings extend the theoretical foundations of best-arm identification and reinforce the role of likelihood ratio-based methods as principled and optimal strategies for sequential decision-making under uncertainty. Future research directions include generalizing these results to structured bandit models, such as combinatorial or contextual settings, and developing adaptive algorithms that leverage structural information while preserving optimality guarantees.

\appendices 
\section{Proof of Lemma \ref{easy:lem2}} \label{easylem2proof}
Let $P_{\mu}$ and $P_{\lambda}$ be distributions in a one-parameter exponential family. Then the KL divergence has the form:
\begin{align}
&d(\mu,\lambda)\nn\\
&\quad =D(P_{\mu}\|P_{\lambda})\\
&\quad =A(\theta(\lambda))-A(\theta(\mu))-(\theta(\lambda)-\theta(\mu))\mu.
\end{align} 
First, we show that for a fixed $\mu$, the function $d(\mu,\lambda)$ is convex in $\lambda$ for the common distributions above. 
\begin{itemize}
\item Gaussian Distribution (Known Variance):

Let $X \sim \mathcal{N}(\mu, \sigma^2)$. The natural parameter is $\theta(\lambda) = \lambda/\sigma^2$, and the log-partition function is $A(\theta) = \frac{\sigma^2 \theta^2}{2}$.

The KL divergence is:
\[
d(\mu, \lambda) = \frac{(\mu - \lambda)^2}{2\sigma^2}.
\]

This is a quadratic function in $\lambda$, and hence convex. In fact,
\[
\frac{d^2}{d\lambda^2} d(\mu, \lambda) = \frac{1}{\sigma^2} > 0.
\]

\item Bernoulli Distribution:

Let $X \sim \text{Bernoulli}(\lambda)$ with $\lambda \in (0,1)$. The KL divergence is:
\[
d(\mu, \lambda) = \mu \log\left(\frac{\mu}{\lambda}\right) + (1-\mu) \log\left(\frac{1-\mu}{1-\lambda}\right).
\]

Taking the second derivative with respect to $\lambda$, we obtain:
\[
\frac{d^2}{d\lambda^2} d(\mu, \lambda) = \frac{\mu}{\lambda^2} + \frac{1 - \mu}{(1 - \lambda)^2}.
\]

Since $\lambda \in (0,1)$ and $\mu \in (0,1)$, this expression is strictly positive. Thus, $d(\mu, \lambda)$ is convex in $\lambda$.

\item Poisson Distribution:

Let $X \sim \text{Poisson}(\lambda)$ with $\lambda > 0$. The KL divergence is:
\[
d(\mu, \lambda) = \lambda - \mu + \mu \log\left(\frac{\mu}{\lambda}\right).
\]

Taking the first and second derivatives:
\[
\frac{d}{d\lambda} d(\mu, \lambda) = 1 - \frac{\mu}{\lambda}, \quad \frac{d^2}{d\lambda^2} d(\mu, \lambda) = \frac{\mu}{\lambda^2} > 0.
\]

Hence, $d(\mu, \lambda)$ is convex in $\lambda$ for Poisson distributions.

\end{itemize}

In all three cases—Gaussian, Bernoulli, and Poisson—the KL divergence $d(\mu, \lambda)$ is convex in the second argument $\lambda$, which is a useful property in many optimization and information-theoretic contexts.

Therefore, $\{d(\mu_i,\lambda_i)\}_{i=1}^M$ and $d(\mu_a,\lambda_a)$ are convex in their respective arguments. 

Now, define
\begin{align}
f(\lambda_1,\lambda_2,\cdots, \lambda_M, \lambda_a)=\sum_{i=1}^M w_i d(\mu_i,\lambda_i)+ w_a d(\mu_a,\lambda_a).
\end{align}
Since each term $d(\mu_i,\lambda_i)$  is convex in $\lambda_i$, and since $w_1, w_2,\cdots,w_M, w_a>0$, their weighted sum is jointly convex in $(\lambda_1,\lambda_2,\cdots, \lambda_M, \lambda_a)$. This is a standard result from convex analysis. 
\section{Proof of Lemma \ref{easy:lem}}
First, we find $x$ such that
\begin{align}
c_1 x= \log (c_2 x^{\alpha})=\log c_2 + \alpha \log x. 
\end{align}
This equation is equivalent to
\begin{align}
c_1 x- \alpha \log x = \log c_2 \label{eq100}.
\end{align}
Let $x=\frac{\alpha}{c_1} y$, then \eqref{eq100} is equivalent to
\begin{align}
y=\log y+ A,
\end{align}
or
\begin{align}
(-y) e^{-y}=-e^{-A}.
\end{align}
Now, since $A\geq \log e$, we have $- e^{-A}\in [-\frac{1}{e},0)$. This means that
\begin{align}
y=-W_{-1}(-e^{-A})
\end{align} where $W_{-1}(\cdot)$ is the Lambert W function. 

Now, by \cite{Chatzigeorgiou2013BoundsOT}, it holds that
\begin{align}
-1-\sqrt{2u}-u<W_{-1}(-e^{-u-1})<-1-\sqrt{2u}-\frac{2}{3}u 
\end{align} for all $u>0$. It follows that
\begin{align}
y=-W_{-1}(-e^{-A}) \leq A+\sqrt{2(A-1)}, \qquad \text{if}\quad  A>1. 
\end{align} which leads to
\begin{align}
x\leq \frac{\alpha}{c_1}\big(A+\sqrt{2(A-1)}\big). 
\end{align}
\subsection*{Acknowledgements} The author would like to thank Ho Chi Minh City University of Technology (HCMUT), Vietnam National University, Ho Chi Minh City (VNU-HCM), for supporting this study. 
The author also sincerely thanks Jonathan Scarlett for his insightful suggestions, which significantly improved the manuscript. Appreciation is further extended to the Associate Editor and the reviewers for their constructive comments and valuable feedback, which helped enhance the overall quality of the paper.
\bibliographystyle{IEEEtran}
\bibliography{isitbib}

\begin{thebibliography}{10}

\bibitem{Degenne2019PureEW}
R{\'e}my Degenne and Wouter~M. Koolen.
\newblock Pure exploration with multiple correct answers.
\newblock In {\em Conference on Neural Information Processing Systems
  (NeurIPS)}, 2019.

\bibitem{even2006action}
Eyal Even-Dar, Shie Mannor, and Yishay Mansour.
\newblock Action elimination and stopping conditions for the multi-armed bandit
  and reinforcement learning problems.
\newblock {\em Journal of Machine Learning Research}, 7:1079 -- 1105, 2006.

\bibitem{garivier2016optimal}
Aurélien Garivier and Emilie Kaufmann.
\newblock Optimal best arm identification with fixed confidence.
\newblock In {\em Conference on Learning Theory (COLT)}, 2016.

\bibitem{gabillon2012best}
Victor Gabillon, Mohammad Ghavamzadeh, and Alessandro Lazaric.
\newblock Best arm identification: A unified approach to fixed budget and fixed
  confidence.
\newblock In {\em Advances in Neural Information Processing Systems (NeurIPS)},
  2012.

\bibitem{jamieson2014lil}
Kevin Jamieson, Matthew Malloy, Robert Nowak, and Sébastien Bubeck.
\newblock lil'{UCB}: An optimal exploration algorithm for multi-armed bandits.
\newblock In {\em Conference on Learning Theory (COLT)}, 2014.

\bibitem{kalyanakrishnan2012pac}
Shivaram Kalyanakrishnan, Ambuj Tewari, Peter Auer, and Peter Stone.
\newblock {PAC} subset selection in stochastic multi-armed bandits.
\newblock In {\em International Conference on Machine Learning (ICML)}, 2012.

\bibitem{chen2014combinatorial}
Lijie Chen, Jian Li, Tianbao Qin, Weinan Wang, and Tie-Yan Zhang.
\newblock Combinatorial pure exploration of multi-armed bandits.
\newblock In {\em Advances in Neural Information Processing Systems (NeurIPS)},
  2014.

\bibitem{jamieson2014bandit}
Jean-Yves Audibert and S{\'e}bastien Bubeck.
\newblock Best arm identification in multi-armed bandits.
\newblock In {\em Conference on Learning Theory (COLT)}, 2010.

\bibitem{simchi2021best}
Marta Soare, Alessandro Lazaric, and R{\'e}mi Munos.
\newblock Best-arm identification in linear bandits.
\newblock In {\em Advances in Neural Information Processing Systems (NeurIPS)},
  2014.

\bibitem{xu2021fairness}
Alessio Russo and Filippo Vannella.
\newblock Fair best arm identification with fixed confidence.
\newblock In {\em IEEE Conference on Decision and Control}, 2024.

\bibitem{Kaufmann2014OnTC}
Emilie Kaufmann, Olivier Capp{\'e}, and Aur{\'e}lien Garivier.
\newblock On the complexity of best-arm identification in multi-armed bandit
  models.
\newblock {\em Journal of Machine Learning Research (JMLR)}, 17:1-- 42, 2016.

\bibitem{Chambaz2005AMA}
Antoine Chambaz, Aur{\'e}lien Garivier, and Elisabeth Gassiat.
\newblock A minimum description length approach to hidden {Markov} models with
  {Poisson} and {Gaussian} emissions. {Application} to order identification.
\newblock {\em Journal of Statistical Planning and Inference}, 139:962--977,
  2009.

\bibitem{Krichevsky1981}
R.~E. Krichevsky and V.~K. Trofimov.
\newblock The performance of universal coding.
\newblock {\em IEEE Transactions on Information Theory}, 27(2):199--207, 1981.

\bibitem{Chatzigeorgiou2013BoundsOT}
Ioannis Chatzigeorgiou.
\newblock Bounds on the {Lambert} function and their application to the outage
  analysis of user cooperation.
\newblock {\em IEEE Communications Letters}, 17:1505--1508, 2013.

\end{thebibliography}
\begin{IEEEbiographynophoto}{Lan V. Truong} (S'12-M'15-SM'24) received his B.S.E. in Electronics and Telecommunications from Posts and Telecommunications Institute of Technology (PTIT), Vietnam, in 2003, his M.S.E. from the School of Electrical and Computer Engineering at Purdue University in 2011, and his Ph.D. in Electrical and Computer Engineering from National University of Singapore (NUS) in 2018.

He previously worked as an Operation and Maintenance Engineer at MobiFone Telecommunications Corporation in Hanoi. In 2012, he was a Research Assistant with the NSF Center for Science of Information and the Department of Computer Science at Purdue University. From 2013 to 2015, he served as an Academic Lecturer in the Department of Information Technology Specialization at FPT University.

Following his Ph.D., he was a Research Fellow in the Department of Computer Science, School of Computing at NUS (2018–2019). Since 2020, he has been a Research Associate in the Department of Engineering at the University of Cambridge. From 2023 to 2025, he held a Lecturer (Assistant Professor) position at the School of Mathematics, Statistics and Actuarial Science at the University of Essex.

Since September 2025, he has been a Lecturer (Assistant Professor) at the Faculty of Computer Science and Engineering, Ho Chi Minh City University of Technology (HCMUT), Vietnam National University Ho Chi Minh City (VNU-HCM). His research interests include information theory, machine learning, and high-dimensional statistics.
\end{IEEEbiographynophoto}
%\bibliographystyle{unsrt}
%\bibliography{isitbib}
\end{document}